\newtheorem{theorem}{\noindent Theorem}
\newtheorem{lemma}[theorem]{\noindent Lemma}
\newtheorem{definition}{\noindent Definition}
\newcommand{\map}[3]{\mbox{$#1\!:\!#2\! \rightarrow\!#3$}}
\newcommand{\phrase}[1]{\text{\emph{phrase}}(#1)}
\newcommand{\curve}[1]{\text{\emph{curve}}(#1)}
\newcommand{\seg}[1]{\text{\emph{seg}}(#1)}
\newcommand{\shape}[1]{\text{\emph{shape}}(#1)}
\newcommand{\convex}[1]{\text{\emph{convex}}(#1)}
\newcommand{\goal}{\text{\emph{goal}}}
\newcommand{\context}{\text{\emph{context}}}
\newcommand{\abs}{\text{\emph{abs}}}
\newcommand{\true}{\text{\bf true}}
\newcommand{\onerule}[1]{
  \parbox{2.0in}{#1}}
\def\irule#1#2#3#4{
\begin{tabbing}
#2 \parbox{.75in}{\noindent \hrule ~} $#3$ \\ #4
\end{tabbing}}
\def\ant#1{$#1$ \\}
\def\con#1{$#1$}
\begin{document}

\title{The Generalized A* Architecture}

\author{\name Pedro F. Felzenszwalb \email pff@cs.uchicago.edu \\
  \addr Department of Computer Science \\
  \addr University of Chicago \\
  \addr Chicago, IL 60637 
  \AND
  \name David McAllester \email mcallester@tti-c.org \\
  \addr Toyota Technological Institute at Chicago \\
  \addr Chicago, IL 60637}

\maketitle

\begin{abstract}
 We consider the problem of computing a lightest derivation of a global
 structure using a set of weighted rules.  A large variety of
 inference problems in AI can be formulated in this framework.  We
 generalize A* search and heuristics derived from abstractions to a
 broad class of lightest derivation problems.  We also describe a new
 algorithm that searches for lightest derivations using a hierarchy of
 abstractions.  Our generalization of A* gives a new algorithm for
 searching AND/OR graphs in a bottom-up fashion.
  
 We discuss how the algorithms described here provide a general
 architecture for addressing the pipeline problem --- the problem of
 passing information back and forth between various stages of
 processing in a perceptual system.  We consider examples in computer
 vision and natural language processing.  We apply the hierarchical
 search algorithm to the problem of estimating the boundaries of
 convex objects in grayscale images and compare it to other search
 methods.  A second set of experiments demonstrate the use of a new
 compositional model for finding salient curves in images.
\end{abstract}

\section{Introduction}

We consider a class of problems defined by a set of weighted rules for
composing structures into larger structures.  The goal in such
problems is to find a lightest (least cost) derivation of a global
structure derivable with the given rules.  A large variety of
classical inference problems in AI can be expressed within this
framework.  For example the global structure might be a parse tree, a
match of a deformable object model to an image, or an assignment of
values to variables in a Markov random field.

We define a \emph{lightest derivation problem} in terms of a set of
statements, a set of weighted rules for deriving statements using
other statements and a special goal statement.  In each case we are
looking for the lightest derivation of the goal statement.  We usually
express a lightest derivation problem using rule ``schemas'' that
implicitly represent a very large set of rules in terms of a small
number of rules with variables.  Lightest derivation problems are
formally equivalent to search in AND/OR graphs \cite{Nilsson80}, but
we find that our formulation is more natural for the 
applications we are interested in.  

One of the goals of this research is the construction of
algorithms for global optimization across many levels of processing in
a perceptual system.  As described below our algorithms can be used to
integrate multiple stages of a processing pipeline into a single
global optimization problem that can be solved efficiently.

Dynamic programming is a fundamental technique for designing
efficient inference algorithms.  Good examples are the Viterbi
algorithm for hidden Markov models \cite{Rabiner89} and chart parsing
methods for stochastic context free grammars \cite{Charniak}.  The
algorithms described here can be used to speed up the
solution of problems normally solved using dynamic programming.  We
demonstrate this for a specific problem, where the goal is to estimate
the boundary of a convex object in a cluttered image.  In a second set
of experiments we show how our algorithms can be used to find salient
curves in images.  We describe a new model for salient curves based on
a compositional rule that enforces long range shape constraints.  This
leads to a problem that is too large to be solved using classical dynamic
programming methods.

The algorithms we consider are all related to Dijkstra's shortest
paths algorithm (DSP) \cite{Dijkstra59} and A* search \cite{Hart68}.
Both DSP and A* can be used to find a shortest path in a cyclic graph.
They use a priority queue to define an order in which nodes are
expanded and have a worst case running time of $O(M \log N)$ where $N$
is the number of nodes in the graph and $M$ is the number of edges.
In DSP and A* the expansion of a node $v$ involves
generating all nodes $u$ such that there is an edge from $v$ to $u$.
The only difference between the two methods is that A* uses a
heuristic function to avoid expanding non-promising nodes.

Knuth gave a generalization of DSP that can be used to solve a
lightest derivation problem with cyclic rules \cite{Knuth77}.  We call
this Knuth's lightest derivation algorithm (KLD).  In analogy to
Dijkstra's algorithm, KLD uses a priority queue to define an order in
which statements are expanded.  Here the expansion of a statement $v$
involves generating all conclusions that can be derived in a single
step using $v$ and other statements already expanded.  As long as each
rule has a bounded number of antecedents KLD also has a worst case
running time of $O(M \log N)$ where $N$ is the number of statements in
the problem and $M$ is the number of rules.  Nilsson's AO* algorithm
\citeyear{Nilsson80} can also be used to solve lightest derivation
problems.  Although AO* can use a heuristic function, it is not a true
generalization of A* --- it does not use a priority queue, only handles
acyclic rules, and can require $O(MN)$ time even when applied to a
shortest path problem.\footnote{There are extensions that handle
cyclic rules \cite{Jimenez00}.} In particular, AO* and its variants use
a backward chaining technique that starts at the goal and repeatedly
refines subgoals, while A* is a forward chaining algorithm.\footnote{
AO* is backward chaining in terms of the inference rules defining a
lightest derivation problem.}

\citeA{Klein03} described an A* parsing algorithm that is similar to KLD
but can use a heuristic function.  One of our
contributions is a generalization of this algorithm to arbitrary
lightest derivation problems.  We call this algorithm A* lightest
derivation (A*LD).  The method is forward chaining, uses a priority
queue to control the order in which statements are expanded, handles
cyclic rules and has a worst case running time of $O(M \log N)$ for
problems where each rule has a small number of antecedents.  A*LD can
be seen as a true generalization of A* to lightest derivation
problems.  For a lightest derivation problem that comes from a
shortest path problem A*LD is identical to A*.

Of course the running times seen in practice are often not well
predicted by worst case analysis.  This is specially true for problems
that are very large and defined implicitly.  For example, we can use
dynamic programming to solve a shortest path problem in an acyclic
graph in $O(M)$ time.  This is better than the $O(M \log N)$ bound for
DSP, but for implicit graphs DSP can be much more efficient since it
expands nodes in a best-first order.  When searching for a shortest
path from a source to a goal, DSP will only expand nodes $v$ with
$d(v) \leq w^*$.  Here $d(v)$ is the length of a shortest path from
the source to $v$, and $w^*$ is the length of a shortest path from the
source to the goal.  In the case of A* with a monotone and admissible
heuristic function, $h(v)$, it is possible to obtain a similar bound
when searching implicit graphs.  A* will only expand nodes $v$ with
$d(v) + h(v) \leq w^*$.

The running time of KLD and A*LD can be expressed in a similar way.
When solving a lightest derivation problem, KLD will only expand
statements $v$ with $d(v) \leq w^*$.  Here $d(v)$ is the weight of a
lightest derivation for $v$, and $w^*$ is the weight of a lightest
derivation of the goal statement.  Furthermore, A*LD will only expand
statements $v$ with $d(v) + h(v) \leq w^*$.  Here the heuristic
function, $h(v)$, gives an estimate of the additional weight necessary
for deriving the goal statement using a derivation of $v$.  The
heuristic values used by A*LD are analogous to the distance from a
node to the goal in a graph search problem (the
notion used by A*).  We note that these heuristic values are
significantly different from the ones used by AO*.  In the case of AO*
the heuristic function, $h(v)$, would estimate the weight of a
lightest derivation for $v$.

An important difference between A*LD and AO* is that A*LD computes
derivations in a bottom-up fashion, while AO* uses a top-down
approach.  Each method has advantages, depending on the type of
problem being solved.  For example, a classical problem in computer
vision involves grouping pixels into long and smooth curves.  We can
formulate the problem in terms of finding smooth curves between pairs
of pixels that are far apart.  For an image with $n$ pixels there are
$\Omega(n^2)$ such pairs.  A straight forward implementation of a
top-down algorithm would start by considering these $\Omega(n^2)$
possibilities.  A bottom-up algorithm would start with $O(n)$ pairs of
nearby pixels.  In this case we expect that a bottom-up grouping
method would be more efficient than a top-down method.

The classical AO* algorithm requires the set of rules to be acyclic.
\citeA{Jimenez00} extended the method to handle cyclic rules.  Another
top-down algorithm that can handle cyclic rules is described by
\citeA{Bonet05}.  \citeA{Hansen01} described a search algorithm for
problems where the optimal solutions themselves can be cyclic.  The
algorithms described in this paper can handle problems with cyclic
rules but require that the optimal solutions be acyclic.  We also note that
AO* can handle rules with non-superior weight functions (as defined in
Section~\ref{sec:KLD}) while KLD requires superior weight functions.
A*LD replaces this requirement by a requirement on the heuristic
function.  

A well known method for defining heuristics for A* is to consider an
abstract or relaxed search problem.  For example, consider the problem
of solving a Rubik's cube in a small number of moves.  Suppose we
ignore the edge and center pieces and solve only the corners.  This is
an example of a problem abstraction.  The number of moves necessary to
put the corners in a good configuration is a lower bound on the number
of moves necessary to solve the original problem.  There are fewer
corner configurations than there are full configurations and that
makes it easier to solve the abstract problem.  In general, shortest
paths to the goal in an abstract problem can be used to define an
admissible and monotone heuristic function for solving the original
problem with A*.  

Here we show that abstractions can also be used to define heuristic
functions for A*LD.  In a lightest derivation problem the notion of a
shortest path to the goal is replaced by the notion of a lightest
context, where a context for a statement $v$ is a derivation of the
goal with a ``hole'' that can be filled in by a derivation
of $v$.  The computation of lightest abstract contexts is itself a
lightest derivation problem.

Abstractions are related to problem relaxations defined by
\citeA{Pearl84}.  While abstractions often lead to small
problems that are solved through search, relaxations can lead to
problems that still have a large state space but may be simple enough
to be solved in closed form.  The definition of abstractions that we
use for lightest derivation problems includes relaxations as a special
case.

Another contribution of our work is a hierarchical search method that
we call HA*LD.  This algorithm can effectively use a hierarchy of
abstractions to solve a lightest derivation problem.  The algorithm is
novel even in the case of classical search (shortest paths) problem.
HA*LD searches for lightest derivations and contexts at every level of
abstraction simultaneously.  More specifically, each level of
abstraction has its own set of statements and rules.  The search for
lightest derivations and contexts at each level is controlled by a
single priority queue.  To understand the running time of HA*LD, let
$w^*$ be the weight of a lightest derivation of the goal in the
original (not abstracted) problem.  For a statement $v$ in the
abstraction hierarchy let $d(v)$ be the weight of a lightest
derivation for $v$ at its level of abstraction.  Let $h(v)$ be the
weight of a lightest context for the abstraction of $v$ (defined at
one level above $v$ in the hierarchy).  Let $K$ be the total number of
statements in the hierarchy with $d(v) + h(v) \le w^*$.  HAL*D expands
at most $2K$ statements before solving the original
problem.  The factor of two comes from the fact that the algorithm
computes both derivations and contexts at each level of abstraction.

Previous algorithms that use abstractions for solving search problems
include methods based on pattern databases
\cite{Culberson98,Korf97,Korf02}, Hierarchical A* (HA*, HIDA*)
\cite{Holte96,Holte05} and coarse-to-fine dynamic programming (CFDP)
\cite{Raphael01}.  Pattern databases have made it possible to compute
solutions to impressively large search problems.  These
methods construct a lookup table of shortest paths from a node to the
goal at all abstract states.  In practice the approach is limited to
tables that remain fixed over different problem instances, or
relatively small tables if the heuristic must be recomputed for each
instance.  For example, for the Rubik's cube we can precompute the
number of moves necessary to solve every corner configuration.  This
table can be used to define a heuristic function when solving any full
configuration of the Rubik's cube.  Both HA* and HIDA* use a hierarchy
of abstractions and can avoid searching over all nodes at any level of
the hierarchy.  On the other hand, in directed graphs these methods
may still expand abstract nodes with arbitrarily large heuristic
values.  It is also not clear how to generalize HA* and HIDA* to
lightest derivation problems that have rules with more than one
antecedent.  Finally, CFDP is related to AO* in that it repeatedly
solves ever more refined problems using dynamic programming.  This
leads to a worst case running time of $O(NM)$.  We will discuss the
relationships between HA*LD and these other hierarchical methods in
more detail in Section~\ref{sec:hierarchical}.  

We note that both A* search and related algorithms have been
previously used to solve a number of problems that are not classical
state space search problems.  This includes the traveling salesman
problem \cite{Zhang96}, planning \cite{Edelkamp02}, multiple sequence
alignment \cite{Korf05}, combinatorial problems on graphs
\cite{Felner05} and parsing using context-free-grammars
\cite{Klein03}.  The work by \citeA{Bulitko06} uses a hierarchy of
state-space abstractions for real-time search.

\subsection{The Pipeline Problem}

A major problem in artificial intelligence is the integration of
multiple processing stages to form a complete perceptual system.  We
call this the \emph{pipeline problem}.  In general we have a
concatenation of systems where each stage feeds information to the
next.  In vision, for example, we might have an edge detector feeding
information to a boundary finding system, which in turn feeds
information to an object recognition system.


Because of computational constraints and the need to build modules
with clean interfaces pipelines often make hard decisions at module
boundaries.  For example, an edge detector typically constructs a
Boolean array that indicates weather or not an edge was detected at
each image location.  But there is general recognition that the
presence of an edge at a certain location can depend on the context
around it.  People often see edges at places where the image gradient
is small if, at higher cognitive level, it is clear that there is
actually an object boundary at that location.  Speech recognition
systems try to address this problem by returning n-best lists, but
these may or may not contain the actual utterance.  We would like the
speech recognition system to be able to take high-level information
into account and avoid the hard decision of exactly what strings to
output in its n-best list.

A processing pipeline can be specified by describing each of its
stages in terms of rules for constructing structures using structures
produced from a previous stage.  In a vision system one stage could
have rules for grouping edges into smooth curves while the next stage
could have rules for grouping smooth curves into objects.  In this
case we can construct a single lightest derivation problem
representing the entire system.  Moreover, a hierarchical set of
abstractions can be applied to the entire pipeline.  By using HA*LD to
compute lightest derivations a complete scene interpretation derived
at one level of abstraction guides all processing stages at a more
concrete level.  This provides a mechanism that enables coarse
high-level processing to guide low-level computation.  We believe that
this is an important property for implementing efficient perceptual
pipelines that avoid making hard decisions between processing stages.

We note that the formulation of a complete computer vision system as a
lightest derivation problem is related to the work by \citeA{Geman02},
\citeA{Tu05} and \citeA{Jin06}.  In these papers image understanding is
posed as a parsing problem, where the goal is to explain the image in
terms of a set of objects that are formed by the (possibly recursive)
composition of generic parts.  \citeA{Tu05} use data
driven MCMC to compute ``optimal'' parses while 
\citeA{Geman02} and \citeA{Jin06} use a bottom-up algorithm for building
compositions in a greedy fashion.  Neither of these methods are
guaranteed to compute an optimal scene interpretation.  We hope that
HA*LD will provide a more principled computational technique for
solving large parsing problems defined by compositional models.

\subsection{Overview}

We begin by formally defining lightest derivation problems in
Section~\ref{sec:LDP}.  That section also discusses dynamic
programming and the relationship between lightest derivation problems
and AND/OR graphs.  In Section~\ref{sec:KLD} we describe Knuth's
lightest derivation algorithm.  In Section~\ref{sec:ALD} we describe
A*LD and prove its correctness.  Section~\ref{sec:abs} shows how
abstractions can be used to define mechanically constructed heuristic
functions for A*LD.  We describe HA*LD in Section~\ref{sec:HALD} and
discuss its use in solving the pipeline problem in
Section~\ref{sec:pipeline}.  Section~\ref{sec:hierarchical} discusses
the relationship between HA*LD and other hierarchical search methods.
In Sections \ref{sec:experiments} and \ref{sec:curves} we
present some experimental results.  We conclude in
Section~\ref{sec:conclusion}.

\section{Lightest Derivation Problems}
\label{sec:LDP}

Let $\Sigma$ be a set of statements and $R$ be a set of inference
rules of the following form,

\onerule{\irule{}
{\ant{A_1=w_1}
 \hspace{.5cm}\ant{\vdots}
 \ant{A_n=w_n}}
{}
{\con{C = g(w_1,\ldots,w_n)}}}

Here the antecedents $A_i$ and the conclusion $C$ are statements in
$\Sigma$, the weights $w_i$ are non-negative real valued variables and
$g$ is a non-negative real valued weight function.  For a rule with no
antecedents the function $g$ is simply a non-negative real value.
Throughout the paper we also use $A_1,\ldots,A_n \rightarrow_g C$ to
denote an inference rule of this type.

A \emph{derivation} of $C$ is a finite tree rooted at a rule
$A_1,\ldots,A_n \rightarrow_g C$ with $n$ children, where the $i$-th
child is a derivation of $A_i$.  The leaves of this tree are rules
with no antecedents.  Every derivation has a $\emph{weight}$ that is
the value obtained by recursive application of the functions $g$ along
the derivation tree.  Figure~\ref{fig:deriv} illustrates a derivation
tree.

\begin{figure}
\centering
\includegraphics[width=4.2in]{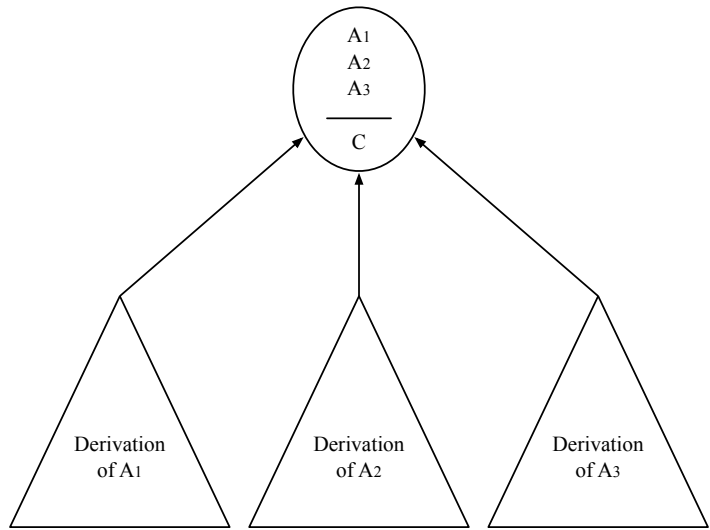}
\caption{A derivation of $C$ is a tree of rules rooted at a rule $r$
with conclusion $C$.  The children of the root are derivations of
the antecedents in $r$.  The leafs of the tree are rules with no antecedents.}
\label{fig:deriv}
\end{figure}

Intuitively a rule $A_1,\ldots,A_n \rightarrow_g C$ says that if we
can derive the antecedents $A_i$ with weights $w_i$ then we can derive
the conclusion $C$ with weight $g(w_1,\ldots,w_n)$.  The problem we
are interested in is to compute a lightest derivation of a special
goal statement.

All of the algorithms discussed in this paper assume that the weight
functions $g$ associated with a lightest derivation problem are
non-decreasing in each variable.  This is a fundamental property
ensuring that lightest derivations have an optimal substructure
property.  In this case lightest derivations can be constructed
from other lightest derivations.

To facilitate the runtime analysis of algorithms we assume that every
rule has a small number of antecedents.  We use $N$ to denote the
number of statements in a lightest derivation problem, while $M$
denotes the number of rules.  For most of the problems we are
interested in $N$ and $M$ are very large but the problem can be
implicitly defined in a compact way, by using a small number of rules
with variables as in the examples below.  We also assume that $N \le
M$ since statements that are not in the conclusion of some rule are
clearly not derivable and can be ignored.

\subsection{Dynamic Programming}

We say that a set of rules is \emph{acyclic} if there is an ordering
$O$ of the statements in $\Sigma$ such that for any rule with
conclusion $C$ the antecedents are statements that come before $C$ in
the ordering.  Dynamic programming can be used to solve a lightest
derivation problem if the functions $g$ in each rule are
non-decreasing and the set of rules is acyclic.  In this case lightest
derivations can be computed sequentially in terms of an acyclic
ordering $O$.  At the $i$-th step a lightest derivation of the $i$-th
statement is obtained by minimizing over all rules that can be used to
derive that statement.  This method takes $O(M)$ time to compute a
lightest derivation for each statement in $\Sigma$.  

We note that for cyclic rules it is sometimes possible to compute
lightest derivations by taking multiple passes over the statements.
We also note that some authors would refer to Dijkstra's algorithm
(and KLD) as a dynamic programming method.  In this paper we only use
the term when referring to algorithms that compute lightest derivations
in a fixed order that is independent of the solutions computed along
the way (this includes recursive implementations that use memoization).

\subsection{Examples}

Rules for computing shortest paths from a single source in a weighted
graph are shown in Figure~\ref{fig:path}.  We assume that we are given
a weighted graph $G=(V,E)$, where $w_{xy}$ is a non-negative weight
for each edge $(x,y) \in E$ and $s$ is a distinguished start node.
The first rule states that there is a path of weight zero to the start
node $s$.  The second set of rules state that if there is a path to a
node $x$ we can extend that path with an edge from $x$ to $y$ to
obtain an appropriately weighted path to a node $y$.  There is a rule
of this type for each edge in the graph.  A lightest derivation of
$\path{x}$ corresponds to shortest path from $s$ to $x$.  Note that
for general graphs these rules can be cyclic.  Figure~\ref{fig:paths}
illustrates a graph and two different derivations of $\path{b}$ using
the rules just described.  These corresponds to two different paths
from $s$ to $b$.

\begin{figure}
(1) 

\parbox{.2cm}{\hspace{.2cm}}
\parbox{2cm}{\irule{}{}{}
{\con{\path{s} = 0}}}

\vspace{.2cm}
(2) for each $(x,y) \in E$, 

\parbox{.2cm}{\hspace{.2cm}}
\parbox{2cm}{\irule{}
{\ant{\path{x} = w}}
{}
{\con{\path{y} = w + w_{xy}}}}
\caption{Rules for computing shortest paths in a graph.}
\label{fig:path}
\end{figure}

\begin{figure}
\centering
\includegraphics[width=5in]{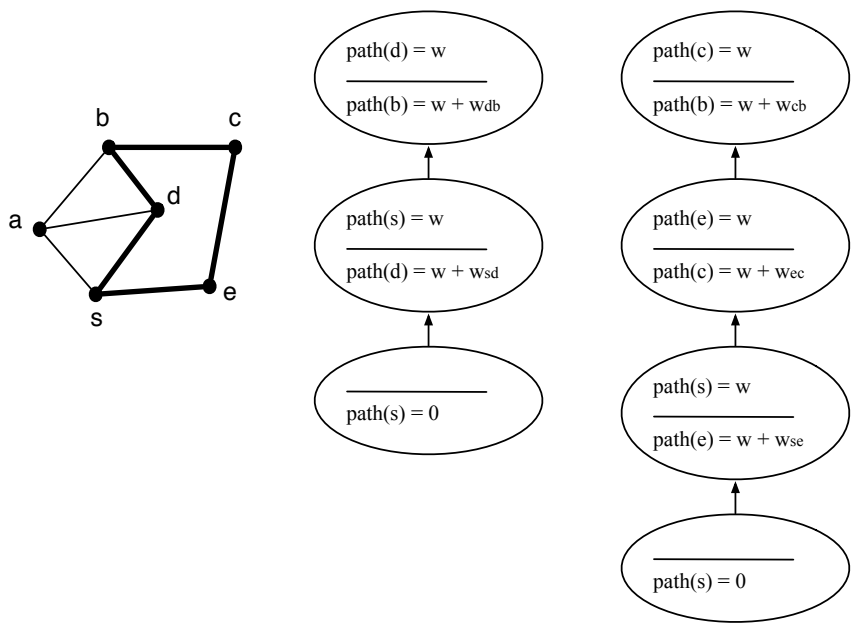}
\caption{A graph with two highlighted paths from $s$ to $b$ and
the corresponding derivations using rules from Figure~\ref{fig:path}.}
\label{fig:paths}
\end{figure}

Rules for chart parsing are shown in Figure~\ref{fig:parse}.  We
assume that we are given a weighted context free grammar in Chomsky
normal form \cite{Charniak}, i.e., a weighted set of productions
of the form $X \rightarrow s$ and $X \rightarrow YZ$ where $X$, $Y$
and $Z$ are nonterminal symbols and $s$ is a terminal symbol.  The
input string is given by a sequence of terminals $(s_1,\ldots,s_n)$.
The first set of rules state that if the grammar contains a production
$X \rightarrow s_i$ then there is a phrase of type $X$ generating the
$i$-th entry of the input with weight $w(X \rightarrow s_i)$.  The
second set of rules state that if the grammar contains a production
$X\rightarrow YZ$ and there is a phrase of type $Y$ from $i$ to $j$
and a phrase of type $Z$ from $j$ to $k$ then there is an,
appropriately weighted, phrase of type $X$ from $i$ to $k$.  Let $S$
be the start symbol of the grammar.  The goal of parsing is to find a
lightest derivation of $\phrase{S,1,n+1}$.  These rules are acyclic
because when phrases are composed together they form longer phrases.

\begin{figure}
(1) for each production $X \rightarrow s_i$,

\parbox{.2cm}{\hspace{1cm}}
\parbox{2cm}{\irule{}{}{}
{\con{\phrase{X,i,i+1} = w(X \rightarrow s_i)}}}

\vspace{.2cm}
(2) for each production $X \rightarrow YZ$ and $1 \le i < j < k \le n+1$,

\parbox{.2cm}{\hspace{1cm}}
\parbox{2cm}{\irule{}
{\ant{\phrase{Y,i,j} = w_1}
 \ant{\phrase{Z,j,k} = w_2}}
{}
{\con{\phrase{X,i,k} = w_1 + w_2 + w(X \rightarrow YZ)}}}
\caption{Rules for parsing with a context free grammar.}
\label{fig:parse}
\end{figure}

\subsection{AND/OR Graphs}
\label{sec:andor}

Lightest derivation problems are closely related to AND/OR graphs.
Let $\Sigma$ and $R$ be a set of statements and rules defining a
lightest derivation problem.  To convert the problem to an AND/OR
graph representation we can build a graph with a disjunction node for
each statement in $\Sigma$ and a conjunction node for each rule in
$R$.  There is an edge from each statement to each rule deriving that
statement, and an edge from each rule to its antecedents.  The leaves
of the AND/OR graph are rules with no antecedents.  Now derivations of
a statement using rules in $R$ can be represented by solutions rooted
at that statement in the corresponding AND/OR graph.  Conversely, it
is also possible to represent any AND/OR graph search problem as a
lightest derivation problem.  In this case we can view each node 
in the graph as a statement in $\Sigma$ and build an appropriate 
set of rules $R$.

\section{Knuth's Lightest Derivation}
\label{sec:KLD}

\citeA{Knuth77} described a generalization of Dijkstra's
shortest paths algorithm that we call Knuth's lightest derivation
(KLD).  Knuth's algorithm can be used to solve a large class of
lightest derivation problems.  The algorithm allows the rules to
be cyclic but requires that the weight functions associated with each
rule be non-decreasing and superior.  Specifically we require the
following two properties on the weight function $g$ in each rule,
\begin{eqnarray*}
\mbox{{\bf non-decreasing:}} && \mbox{if}\; w'_i \geq
w_i\;\mbox{then}\;g(w_1,\ldots,w'_i,\ldots,w_n) \geq
g(w_1,\ldots,w_i,\ldots,w_n) \\ \mbox{{\bf superior:}} &&
g(w_1,\ldots,w_n) \geq w_i
\end{eqnarray*}
For example,
$$\begin{array}{l}
g(x_1,\ldots,x_n) = x_1 + \cdots + x_n \\ 
g(x_1,\ldots,x_n) = \max(x_1,\ldots,x_n) 
\end{array}$$
are both non-decreasing and superior functions.

Knuth's algorithm computes lightest derivations in non-decreasing
weight order.  Since we are only interested in a lightest derivation
of a special goal statement we can often stop the algorithm before
computing the lightest derivation of every statement.

A \emph{weight assignment} is an expression of the form $(B=w)$ where
$B$ is a statement in $\Sigma$ and $w$ is a non-negative real value.
We say that the weight assignment $(B=w)$ is derivable if there is a
derivation of $B$ with weight $w$.  For any set of rules $R$,
statement $B$, and weight $w$ we write $R \vdash (B=w)$ if the rules
in $R$ can be used to derive $(B=w)$.  Let $\ell(B,R)$ be the
infimum of the set of weights derivable for $B$,
$$\ell(B,R) = \inf\{w:\;R\vdash (B=w)\}.$$ Given a set of rules $R$
and a statement $\goal \in \Sigma$ we are interested in computing
a derivation of $\goal$ with weight $\ell(\goal,R)$.

We define a bottom-up logic programming language in which we can
easily express the algorithms we wish to discuss throughout the rest
of the paper.  Each algorithm is defined by a set of rules with
priorities.  We encode the priority of a rule by writing it along the
line separating the antecedents and the conclusion as follows,

\onerule{\irule{}
{\ant{A_1=w_1}
 \hspace{.5cm}\ant{\vdots}
 \ant{A_n=w_n}}
{p(w_1,\ldots,w_n)}
{\con{C = g(w_1,\ldots,w_n)}}}

We call a rule of this form a \emph{prioritized rule}.  The execution
of a set of prioritized rules $P$ is defined by the procedure in
Figure~\ref{fig:algo}.  The procedure keeps track of a set ${\cal S}$
and a priority queue ${\cal Q}$ of weight assignments of the form
$(B=w)$.  Initially ${\cal S}$ is empty and ${\cal Q}$ contains weight
assignments defined by rules with no antecedents at the priorities
given by those rules.  We iteratively remove the lowest priority
assignment $(B=w)$ from ${\cal Q}$.  If $B$ already has an assigned
weight in ${\cal S}$ then the new assignment is ignored.  Otherwise we
add the new assignment to ${\cal S}$ and ``expand it'' --- every
assignment derivable from $(B=w)$ and other assignments already in
${\cal S}$ using some rule in $P$ is added to ${\cal Q}$ at the
priority specified by the rule.  The procedure stops when the queue is
empty.

\begin{figure}
\begin{algorithm}{Run}[P]{}
${\cal S} \qlet \emptyset$ \\
Initialize ${\cal Q}$ with assignments defined by rules
with no antecedents at their priorities \\
\qwhile ${\cal Q}$ is not empty \\
Remove the lowest priority element $(B=w)$ from ${\cal Q}$ \\
\qif $B$ has no assigned weight in ${\cal S}$ \\
${\cal S} \qlet {\cal S} \cup \{(B=w)\}$ \\
Insert assignments derivable from $(B=w)$ and other
assignments in ${\cal S}$ using some rule in $P$ into ${\cal Q}$
at the priority specified by the rule
\qend \qend \\
\qreturn ${\cal S}$ 
\end{algorithm}
\caption{Running a set of prioritized rules.}
\label{fig:algo}
\end{figure}

The result of executing a set of prioritized rules is a set of weight
assignments.  Moreover, the procedure can implicitly keep track of
derivations by remembering which assignments were used to derive an
item that is inserted in the queue.

\begin{lemma}
The execution of a finite set of prioritized rules $P$ derives every
statement that is derivable with rules in $P$.
\end{lemma}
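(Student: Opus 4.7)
The plan is to prove two things in turn: that the while loop in \textsc{Run} terminates, and that every derivable statement eventually receives some weight assignment in $\mathcal{S}$. Both claims rest on the observation that the guard ``if $B$ has no assigned weight in $\mathcal{S}$'' ensures each statement is given at most one assignment in $\mathcal{S}$ over the entire execution.

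For termination, I would argue that for each rule $A_1,\ldots,A_n \rightarrow_g C$ in $P$, the expansion step inserts a conclusion for this rule into $\mathcal{Q}$ at most once over the whole run. This is because such an insertion occurs only at the instant the last of the antecedents $A_1,\ldots,A_n$ enters $\mathcal{S}$, and none of the antecedents can ever receive a fresh assignment in $\mathcal{S}$ afterwards. Together with the one-time initialization insertion for each rule with no antecedents, the total number of items ever placed in $\mathcal{Q}$ is bounded by $|P|$, which is finite by hypothesis, so the loop must terminate.

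For the completeness claim, I would proceed by induction on the height of a derivation tree for the derivable statement $B$. The base case is a rule with no antecedents concluding $B$: its weight assignment is placed in $\mathcal{Q}$ at initialization, and when it is eventually dequeued, either $B$ already has an assignment in $\mathcal{S}$ or this new one is inserted, so in either case $B$ is assigned. For the inductive step, suppose $B$ is derived by a rule $A_1,\ldots,A_n \rightarrow_g B$ whose antecedents have strictly shorter derivations. The inductive hypothesis gives each $A_i$ some assignment $(A_i = w'_i)$ in $\mathcal{S}$; at the moment the last such antecedent is added, the expansion step finds all antecedents of this rule present in $\mathcal{S}$ and inserts $(B = g(w'_1,\ldots,w'_n))$ into $\mathcal{Q}$. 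By termination this item is eventually dequeued, and the usual alternative (either $B$ already has an assignment or it receives this one) places some assignment for $B$ in $\mathcal{S}$.

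The main subtlety worth flagging is that the inductive hypothesis does not promise that each $A_i$ is derived at the specific weight appearing in the chosen derivation tree of $B$; but since the lemma asks only that some weight assignment for $B$ end up in $\mathcal{S}$, this mismatch is precisely what makes the argument easy rather than delicate. Any antecedent weights recorded in $\mathcal{S}$ suffice to fire the rule and yield some weight for $B$, so no coordination between the heights of subderivations and the weights actually derived by the algorithm is needed.
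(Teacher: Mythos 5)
Your proof is correct and follows essentially the same route as the paper's: termination because each rule can fire (insert into $\mathcal{Q}$) at most once, and completeness by induction on derivations using the fact that a rule fires at the moment its last antecedent enters $\mathcal{S}$. The only difference is presentational --- the paper phrases the completeness step as a closure property of the terminal state (when $\mathcal{Q}$ is empty, $\mathcal{S}$ is closed under single-rule derivation), whereas you unfold the same induction explicitly over the course of the execution.
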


\begin{proof}
Each rule causes at most one item to be inserted in the queue.  Thus
eventually ${\cal Q}$ is empty and the algorithm terminates.  When
${\cal Q}$ is empty every statement derivable by a single rule using
antecedents with weight in ${\cal S}$ already has a weight in ${\cal
S}$.  This implies that every derivable statement has a weight in
${\cal S}$.
\end{proof}

Now we are ready to define Knuth's lightest derivation algorithm.  The
algorithm is easily described in terms of prioritized rules.

\begin{definition}[Knuth's lightest derivation]
Let $R$ be a finite set of non-decreasing and superior rules.  Define
a set of prioritized rules ${\cal K}(R)$ by setting the priority of
each rule in $R$ to be the weight of the conclusion.  KLD is given by
the execution of ${\cal K}(R)$.
\end{definition}

We can show that while running ${\cal K}(R)$, if $(B = w)$ is added to
${\cal S}$ then $w=\ell(B,R)$.  This means that all assignments in
${\cal S}$ represent lightest derivations.  We can also show that
assignments are inserted into ${\cal S}$ in non-decreasing weight
order.  If we stop the algorithm as soon as we insert a weight
assignment for $\goal$ into ${\cal S}$ we will expand all statements
$B$ such that $\ell(B,R) < \ell(\goal,R)$ and some statements $B$ such
that $\ell(B,R) = \ell(\goal,R)$.  These properties follow from a more
general result described in the next section.

\subsection{Implementation}

The algorithm in Figure~\ref{fig:algo} can be implemented to run in
$O(M \log N)$ time, where $N$ and $M$ refer to the size of the problem
defined by the prioritized rules $P$.  

In practice the set of prioritized rules $P$ is often specified
implicitly, in terms of a small number of rules with variables.  In
this case the problem of executing $P$ is closely related to the work on logical
algorithms described by \citeA{McAllester02}.  

The main difficulty in devising an efficient implementation of the
procedure in Figure~\ref{fig:algo} is in step 7.  In that step we need
to find weight assignments in ${\cal S}$ that can be combined with
$(B=w)$ to derive new weight assignments.  The logical algorithms work
shows how a set of inference rules with variables can be transformed
into a new set of rules, such that every rule has at most two
antecedents and is in a particularly simple form.  Moreover, this
transformation does not increase the number of rules too much.  Once
the rules are transformed their execution can be implemented
efficiently using a hashtable to represent ${\cal S}$, a heap to
represent ${\cal Q}$ and indexing tables that allow us to perform step
7 quickly.

Consider the second set of rules for parsing in
Figure~\ref{fig:parse}.  These can be represented by a single rule
with variables.  Moreover the rule has two antecedents.  When
executing the parsing rules we keep track of a table mapping a value
for $j$ to statements $\phrase{Y,i,j}$ that have a weight in ${\cal
S}$.  Using this table we can quickly find statements that have a
weight in ${\cal S}$ and can be combined with a statement of the form
$\phrase{Z,j,k}$.  Similarly we keep track of a table mapping a value
for $j$ to statements $\phrase{Z,j,k}$ that have a weight in ${\cal
S}$.  The second table lets us quickly find statements that can be
combined with a statement of the form $\phrase{Y,i,j}$.  We refer the
reader to \cite{McAllester02} for more details.

\section{A* Lightest Derivation}
\label{sec:ALD}

Our A* lightest derivation algorithm (A*LD) is a generalization of A*
search to lightest derivation problems that subsumes A* parsing.  The
algorithm is similar to KLD but it can use a heuristic function to
speed up computation.  Consider a lightest derivation problem with
rules $R$ and goal statement $\goal$.  Knuth's algorithm will expand
any statement $B$ such that $\ell(B,R) < \ell(\goal,R)$.  By using a
heuristic function A*LD can avoid expanding statements that have light
derivations but are not part of a light derivation of $\goal$.

Let $R$ be a set of rules with statements in $\Sigma$, and $h$ be a
heuristic function assigning a weight to each statement.  Here $h(B)$ is
an estimate of the additional weight required to derive $\goal$ using a
derivation of $B$.  We note that in the case of a shortest path
problem this weight is exactly the distance from a node to the goal.
The value $\ell(B,R) + h(B)$ provides a \emph{figure of merit} for
each statement $B$.  The A* lightest derivation algorithm expands
statements in order of their figure of merit.

We say that a heuristic function is \emph{monotone} if for every rule
$A_1,\ldots,A_n \rightarrow_g C$ in $R$ and derivable weight
assignments $(A_i = w_i)$ we have,
\begin{equation}
w_i + h(A_i) \le g(w_1,\ldots,w_n) + h(C).
\label{eqn:monotone}
\end{equation}
This definition agrees with the standard notion of a monotone
heuristic function for rules that come from a shortest path problem.
We can show that if $h$ is monotone and $h(\goal) = 0$ then $h$ is
admissible under an appropriate notion of admissibility.  For the
correctness of A*LD, however, it is only required that $h$ be monotone
and that $h(\goal)$ be finite.  In this case monotonicity implies that
the heuristic value of every statement $C$ that appears in a
derivation of $\goal$ is finite.  Below we assume that $h(C)$ is
finite for every statement.  If $h(C)$ is not finite we can ignore $C$
and every rule that derives $C$.

\begin{definition}[A* lightest derivation]
Let $R$ be a finite set of non-decreasing rules and $h$ be a monotone
heuristic function for $R$.  Define a set of prioritized rules ${\cal
A}(R)$ by setting the priority of each rule in $R$ to be the weight of
the conclusion plus the heuristic value, $g(w_1,\ldots,w_n) + h(C)$.
A*LD is given by the execution of ${\cal A}(R)$.
\end{definition}

Now we show that the execution of ${\cal A}(R)$ correctly computes
lightest derivations and that it expands statements in order of their
figure of merit values.  

\begin{theorem}
During the execution of ${\cal A}(R)$, if $(B = w) \in {\cal S}$ then
$w=\ell(B,R)$.
\end{theorem}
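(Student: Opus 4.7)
The plan is to induct on the order in which assignments enter ${\cal S}$. When $(B=w)$ is about to be added, the direction $w \ge \ell(B,R)$ is immediate: the algorithm only places derivable assignments into ${\cal Q}$, so $w$ is the weight of some actual derivation of $B$. The substance lies in the opposite inequality, which I will obtain by contradiction. Assuming $w > \ell(B,R)$, I will exhibit an assignment already sitting in ${\cal Q}$ at priority strictly less than $w + h(B)$, contradicting that $(B=w)$ was the current minimum of the queue.

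To produce that assignment, let $D$ be a lightest derivation of $B$, so the root of $D$ carries weight $\ell(B,R)$. Walking down $D$ from the root, I pick a statement $A$ appearing in $D$ that has no weight in ${\cal S}$ yet but whose antecedents inside $D$ (if any) already do. Such an $A$ exists because the root $B$ has no weight in ${\cal S}$, and the leaves of $D$ vacuously satisfy the antecedent condition. By the inductive hypothesis, the antecedents $A_i$ occupy ${\cal S}$ at their optimal weights $\ell(A_i,R)$, which are bounded above by the weights $w_i^D$ that $D$ itself uses. The rule $A_1,\ldots,A_k \to_g A$ that $D$ uses at this node therefore fires --- at initialization if $A$ is a leaf, or at step~7 when the last antecedent was moved into ${\cal S}$ --- and deposits some assignment $(A=v)$ into ${\cal Q}$; non-decreasingness of $g$ forces $v \le w_A^D$, where $w_A^D$ denotes the weight $D$ places on $A$.

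Iterating inequality~(\ref{eqn:monotone}) once per rule along the branch of $D$ running from $A$ up to $B$, the heuristic inequalities telescope to $w_A^D + h(A) \le \ell(B,R) + h(B)$, giving
\[
v + h(A) \;\le\; w_A^D + h(A) \;\le\; \ell(B,R) + h(B) \;<\; w + h(B).
\]
So $(A=v)$ should have been extracted from ${\cal Q}$ before $(B=w)$, which either contradicts $A \notin {\cal S}$ or the minimality of $(B=w)$ in the queue. The step I expect to require the most care is the bookkeeping that guarantees $(A=v)$ really entered ${\cal Q}$: I must split on whether $A$ is a leaf of $D$ (so that $(A=v)$ is inserted during the initialization phase) or an internal node (so that $(A=v)$ is inserted at step~7 the moment the final $D$-antecedent reaches ${\cal S}$), and I must invoke non-decreasingness to compare $v$ against the weights $D$ uses rather than against the potentially lighter weights actually stored in ${\cal S}$.
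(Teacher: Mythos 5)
Your proposal is correct and follows essentially the same argument as the paper: find a frontier rule of the derivation whose antecedents are already in ${\cal S}$ but whose conclusion is not, use the induction hypothesis and the non-decreasing property of $g$ to bound the weight queued for that conclusion, and telescope the monotonicity inequality~(\ref{eqn:monotone}) up to $B$ to contradict the minimality of $(B=w)$ in ${\cal Q}$. The only (cosmetic) difference is that the paper works with an arbitrary derivation $T$ of $B$ of weight $w_b' < w$ rather than a lightest one, which sidesteps the question of whether the infimum $\ell(B,R)$ is actually attained; your version is easily repaired the same way.
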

\begin{proof}
The proof is by induction on the size of ${\cal S}$.  The statement is
trivial when ${\cal S} = \emptyset$.  Suppose the statement was true
right before the algorithm removed $(B = w_b)$ from ${\cal Q}$ and
added it to ${\cal S}$.  The fact that $(B = w_b) \in {\cal Q}$
implies that the weight assignment is derivable and thus $w_b \ge
\ell(B,R)$.

Suppose $T$ is a derivation of $B$ with weight $w_b' < w_b$.  Consider
the moment right before the algorithm removed $(B = w_b)$ from ${\cal
Q}$ and added it to ${\cal S}$.  Let $A_1,\ldots,A_n \rightarrow_g C$
be a rule in $T$ such that the antecedents $A_i$ have a weight in
${\cal S}$ while the conclusion $C$ does not.  Let $w_c =
g(\ell(A_1,R),\ldots,\ell(A_n,R))$.  By the induction hypothesis the
weight of $A_i$ in ${\cal S}$ is $\ell(A_i,R)$.  Thus $(C = w_c) \in
{\cal Q}$ at priority $w_c + h(C)$.  Let $w_c'$ be the weight that $T$
assigns to $C$.  Since $g$ is non-decreasing we know $w_c \le w_c'$.
Since $h$ is monotone $w_c' + h(C) \le w_b' + h(B)$.  This follows by
using the monotonicity condition along the path from $C$ to $B$ in
$T$.  Now note that $w_c + h(C) < w_b + h(B)$ which in turn implies
that $(B = w_b)$ is not the weight assignment in ${\cal Q}$ with
minimum priority.
\end{proof}

\begin{theorem}
During the execution of ${\cal A}(R)$ statements are expanded in order
of the figure of merit value $\ell(B,R) + h(B)$. 
\end{theorem}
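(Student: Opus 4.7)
The plan is to reduce this theorem directly to two facts: (i) a min-priority queue always returns elements in non-decreasing order of priority, and (ii) the previous theorem, which pins down the weight of every assignment added to $\mathcal{S}$.

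First I would recall from the description of \textsc{Run} in Figure~\ref{fig:algo} that at each iteration the procedure removes the element of minimum priority from $\mathcal{Q}$. Consequently, if two weight assignments are both removed from $\mathcal{Q}$, they are removed in non-decreasing order of the priorities under which they were inserted. In $\mathcal{A}(R)$, every rule inserts an assignment $(C = g(w_1,\ldots,w_n))$ at priority $g(w_1,\ldots,w_n) + h(C)$, i.e., ``weight plus heuristic'' of the inserted assignment.

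Next I would observe that when a statement $B$ is \emph{expanded}, the assignment $(B=w)$ extracted from $\mathcal{Q}$ is the first one for $B$ that finds $B$ with no weight in $\mathcal{S}$, and by the previous theorem this $w$ equals $\ell(B,R)$. The priority under which this assignment sat in $\mathcal{Q}$ is therefore $w + h(B) = \ell(B,R) + h(B)$. Combining this with the monotonicity of the extraction order yields the theorem: if $B$ is expanded before $B'$, then
\[
\ell(B,R) + h(B) \;\le\; \ell(B',R) + h(B').
\]

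There is no real obstacle here; the substantive content lives in the preceding theorem. The only point I would be careful to articulate is that a given statement $B$ may enter $\mathcal{Q}$ multiple times at different priorities, so ``the priority at which $B$ is expanded'' must unambiguously mean the priority of the first extraction that finds $B$ absent from $\mathcal{S}$ — and that priority is exactly $\ell(B,R) + h(B)$ by the previous theorem, regardless of what other (necessarily larger) priorities any duplicate insertions of $B$ may carry.
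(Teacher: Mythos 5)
There is a genuine gap in your first step. You assert that because \textsc{Run} always removes the minimum-priority element, ``if two weight assignments are both removed from $\mathcal{Q}$, they are removed in non-decreasing order of the priorities under which they were inserted.'' That is not a generic property of a priority queue when insertions are interleaved with removals: after extracting an element at priority $p$, the algorithm may insert new assignments, and nothing about the queue data structure prevents those from having priority below $p$, in which case a later extraction would have smaller priority than an earlier one. (Concretely: extract $(B=w)$ at priority $5$, expansion inserts $(C=w')$ at priority $3$, extract $(C=w')$ --- extraction order $5,3$ is decreasing.) Your proof never rules this out, and indeed never invokes the monotonicity of $h$ at all, which should be a warning sign since the theorem is false for non-monotone heuristics.

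The paper's proof spends its first paragraph establishing exactly the fact you took for free: the minimum priority of $\mathcal{Q}$ never decreases over the course of the execution. The argument is that when $(B=w)$ is expanded, every assignment $(C = g(w_1,\ldots,w_n))$ derivable from it is queued at priority $g(w_1,\ldots,w_n) + h(C)$, and the monotonicity condition $w_i + h(A_i) \le g(w_1,\ldots,w_n) + h(C)$ guarantees this is at least the priority $w + h(B)$ of the element just removed. Only with this in hand does ``extractions occur in non-decreasing priority order'' follow. The second half of your argument --- that the expanded assignment carries weight $\ell(B,R)$ by the previous theorem and hence sat in the queue at priority $\ell(B,R)+h(B)$, with duplicate insertions of $B$ being harmless --- is correct and matches the paper. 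So the fix is localized: replace your appeal to a general priority-queue fact with the monotonicity argument above, and the rest of your proof goes through.
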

\begin{proof}
First we show that the minimum priority of ${\cal Q}$ does not
decrease throughout the execution of the algorithm.  Suppose $(B = w)$
is an element in ${\cal Q}$ with minimum priority.  Removing $(B = w)$
from ${\cal Q}$ does not decrease the minimum priority.  Now suppose
we add $(B = w)$ to ${\cal S}$ and insert assignments derivable from
$(B=w)$ into ${\cal Q}$.  Since $h$ is monotone the priority of every
assignment derivable from $(B=w)$ is at least the priority of $(B=w)$.

A weight assignment $(B=w)$ is expanded when it is removed from ${\cal
Q}$ and added to ${\cal S}$.  By the last theorem $w = \ell(B,R)$ and
by the definition of ${\cal A}(R)$ this weight assignment was queued
at priority $\ell(B,R) + h(B)$.  Since we removed $(B=w)$ from ${\cal
Q}$ this must be the minimum priority in the queue.  The minimum
priority does not decrease over time so we must expand statements in
order of their figure of merit value.
\end{proof}

If we have accurate heuristic functions A*LD can be much more
efficient than KLD.  Consider a situation
where we have a perfect heuristic function.  That is, suppose $h(B)$
is exactly the additional weight required to derive $\goal$ using a
derivation of $B$.  Now the figure of merit $\ell(B,R) + h(B)$ equals
the weight of a lightest derivation of $\goal$ that uses $B$.  In this
case A*LD will derive $\goal$ before expanding any statements that
are not part of a lightest derivation of $\goal$.

The correctness KLD follows from the correctness of A*LD.  For a set
of non-decreasing and superior rules we can consider the trivial
heuristic function $h(B) = 0$.  The fact that the rules are superior
imply that this heuristic is monotone.  The theorems above imply that
Knuth's algorithm correctly computes lightest derivations and expands
statements in order of their lightest derivable weights.

\section{Heuristics Derived from Abstractions}
\label{sec:abs}

Here we consider the case of additive rules --- rules where the weight
of the conclusion is the sum of the weights of the antecedents plus a
non-negative value $v$ called the weight of the rule.  We denote such
a rule by $A_1,\ldots,A_n \rightarrow_v C$.  The weight of a
derivation using additive rules is the sum of the weights of the rules
that appear in the derivation tree.

A \emph{context} for a statement $B$ is a finite tree of rules such
that if we add a derivation of $B$ to the tree we get a derivation of
$\goal$.  Intuitively a context for $B$ is a derivation of $\goal$
with a ``hole'' that can be filled in by a derivation of $B$ (see
Figure~\ref{fig:context}).

\begin{figure}
\centering
\includegraphics[width=5in]{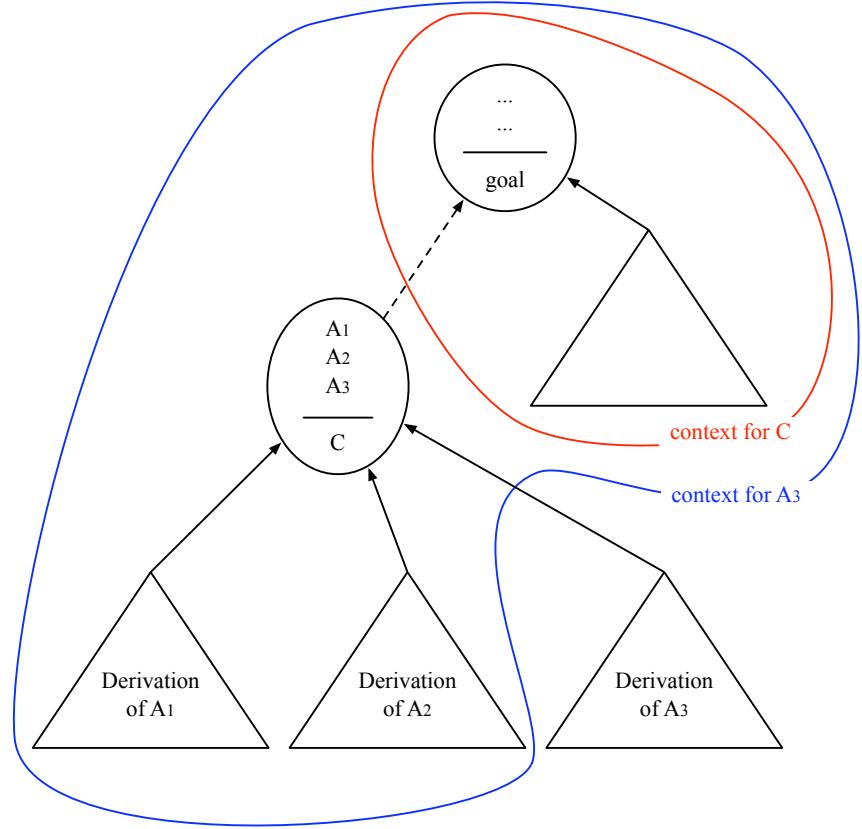}
\caption{A derivation of $\goal$ defines contexts for the statements
that appear in the derivation tree.  Note how a context for $C$
together with a rule $A_1,A_2,A_3 \rightarrow C$ and derivations of
$A_1$ and $A_2$ define a context for $A_3$.}
\label{fig:context}
\end{figure}

For additive rules, each context has a weight that is the sum of
weights of the rules in it.  Let $R$ be a set of additive rules with
statements in $\Sigma$.  For $B \in \Sigma$ we define
$\ell(\context(B),R)$ to be the weight of a lightest context for $B$.
The value $\ell(B,R) + \ell(\context(B),R)$ is the weight of a
lightest derivation of $\goal$ that uses $B$.

Contexts can be derived using rules in $R$ together with context rules
$c(R)$ defined as follows.  First, $goal$ has an empty context with
weight zero.  This is captured by a rule with no antecedents 
$\rightarrow_0 \context(\goal)$.  For each rule
$A_1,\ldots,A_n \rightarrow_v C$ in $R$ we put $n$ rules in $c(R)$.
These rules capture the notion that a context for $C$ and derivations
of $A_j$ for $j \neq i$ define a context for $A_i$,
$$\context(C),A_1,\ldots,A_{i-1},A_{i+1},\ldots,A_n \rightarrow_v
\context(A_i).$$ 
Figure~\ref{fig:context} illustrates how a context for $C$ together
with derivations of $A_1$ and $A_2$ and a rule $A_1,A_2,A_3
\rightarrow C$ define a context for $A_3$.

We say that a heuristic function $h$ is \emph{admissible} if $h(B) \le
\ell(\context(B),R)$.  Admissible heuristic functions never
over-estimate the weight of deriving $\goal$ using a derivation of
a particular statement.  The heuristic function is $\emph{perfect}$ if
$h(B) = \ell(\context(B),R)$.  Now we show how to obtain admissible
and monotone heuristic functions from abstractions.

\subsection{Abstractions}
\label{sec:abstractions}

Let $(\Sigma,R)$ be a lightest derivation problem with statements
$\Sigma$ and rules $R$.  An \emph{abstraction} of $(\Sigma,R)$ is
given by a problem $(\Sigma',R')$ and a map
\map{\abs}{\Sigma}{\Sigma'}, such that for every rule
$A_1,\ldots,A_n \rightarrow_v C$ in $R$ there is a rule
$abs(A_1),\ldots,abs(A_n) \rightarrow_{v'} \abs(C)$ in $R'$ with
$v' \le v$.  Below we show how an abstraction can be used to
define a monotone and admissible heuristic function for the
original problem.

We usually think of $\abs$ as defining a coarsening of $\Sigma$ by
mapping several statements into the same abstract statement.  For
example, for a parser $\abs$ might map a lexicalized nonterminal
$NP_{\mathrm{house}}$ to the nonlexicalized nonterminal $NP$.  In this
case the abstraction defines a smaller problem on the abstract
statements.  Abstractions can often be defined in a mechanical way by
starting with a map $\abs$ from $\Sigma$ into some set of abstract
statements $\Sigma'$.  We can then ``project'' the rules in $R$ from
$\Sigma$ into $\Sigma'$ using $\abs$ to get a set of abstract rules.
Typically several rules in $R$ will map to the same abstract rule.  We
only need to keep one copy of each abstract rule, with a weight that
is a lower bound on the weight of the concrete rules mapping into it.

Every derivation in $(\Sigma,R)$ maps to an abstract derivation so we
have $\ell(\abs(C),R') \le \ell(C,R)$.  If we let the goal of the
abstract problem be $\abs(\goal)$ then every context in $(\Sigma,R)$
maps to an abstract context and we see that
$\ell(\context(\abs(C)),R') \le \ell(\context(C),R)$.  This means that
lightest abstract context weights form an admissible heuristic
function,
$$h(C) = \ell(\context(\abs(C)),R').$$ 
Now we show that this heuristic function is also monotone.  

Consider a rule $A_1,\ldots,A_n \rightarrow_v C$ in $R$ and let $(A_i
= w_i)$ be weight assignments derivable using $R$.  In this case there
is a rule $\abs(A_1),\ldots,\abs(A_n) \rightarrow_{v'} \abs(C)$ in $R'$
where $v' \le v$ and $(abs(A_i) = w_i')$ is derivable using $R'$ where
$w_i' \le w_i$.  By definition of contexts (in the abstract problem)
we have,
$$\ell(\context(\abs(A_i)),R') \le v' + \sum_{j \neq i} w_j' +
\ell(\context(\abs(C)),R').$$ 
Since $v' \le v$ and $w_j' \le w_j$ we have,
$$\ell(\context(\abs(A_i)),R') \le v + \sum_{j \neq i} w_j +
\ell(\context(\abs(C)),R').$$ Plugging in the heuristic function $h$
from above and adding $w_i$ to both sides,
$$w_i + h(A_i) \le v + \sum_{j} w_j + h(C),$$ which is exactly the
monotonicity condition in equation~(\ref{eqn:monotone}) for an
additive rule.

If the abstract problem defined by $(\Sigma',R')$ is relatively small
we can efficiently compute lightest context weights for every
statement in $\Sigma'$ using dynamic programming or KLD.  We can store
these weights in a ``pattern database'' (a lookup table) to serve as a
heuristic function for solving the concrete problem using A*LD.  This
heuristic may be able to stop A*LD from exploring a lot of
non-promising structures.  This is exactly the approach that was used
by \citeA{Culberson98} and \citeA{Korf97} for solving very large
search problems.  The results in this section show that pattern
databases can be used in the more general setting of lightest
derivations problems.  The experiments in Section~\ref{sec:curves}
demonstrate the technique in a specific application.

\section{Hierarchical A* Lightest Derivation}
\label{sec:HALD}



The main disadvantage of using pattern databases is that we have to
precompute context weights for every abstract statement.  This can
often take a lot of time and space.  Here we define a hierarchical
algorithm, HA*LD, that searches for lightest derivations and contexts
in an entire abstraction hierarchy simultaneously.  This algorithm can
often solve the most concrete problem without fully computing context
weights at any level of abstraction.

At each level of abstraction the behavior HA*LD is similar to the
behavior of A*LD when using an abstraction-derived heuristic function.
The hierarchical algorithm queues derivations of a statement $C$ at a
priority that depends on a lightest abstract context for $C$.  But now
abstract contexts are not computed in advance.  Instead, abstract
contexts are computed at the same time we are computing derivations.
Until we have an abstract context for $C$, derivations of $C$ are
``stalled''.  This is captured by the addition of $\context(\abs(C))$
as an antecedent to each rule that derives $C$.

We define an abstraction hierarchy with $m$ levels to be a sequence of
lightest derivation problems with additive rules $(\Sigma_k,R_k)$ for
$0 \le k \le m-1$ with a single abstraction function $\abs$.  For $0
\le k < m-1$ the abstraction function maps $\Sigma_k$ onto
$\Sigma_{k+1}$.  We require that $(\Sigma_{k+1},R_{k+1})$ be an
abstraction of $(\Sigma_k, R_k)$ as defined in the previous section:
if $A_1,\ldots,A_n \rightarrow_v C$ is in $R_k$ then there exists a
rule $\abs(A_1),\ldots,\abs(A_n) \rightarrow_{v'}\abs(C)$ in $R_{k+1}$
with $v' \leq v$.  The hierarchical algorithm computes lightest
derivations of statements in $\Sigma_k$ using contexts from
$\Sigma_{k+1}$ to define heuristic values.  We extend $\abs$ so that
it maps $\Sigma_{m-1}$ to a most abstract set of statements $\Sigma_m$
containing a single element $\bot$.  Since $\abs$ is onto we have
$|\Sigma_k| \geq |\Sigma_{k+1}|$.  That is, the number of statements
decrease as we go up the abstraction hierarchy.  We denote by $\abs_k$
the abstraction function from $\Sigma_0$ to $\Sigma_k$ obtained by
composing $\abs$ with itself $k$ times.

We are interested in computing a lightest derivation of a goal
statement $\goal \in \Sigma_0$.  Let $\goal_k = \abs_k(\goal)$ be the
goal at each level of abstraction.  The hierarchical algorithm is
defined by the set of prioritized rules ${\cal H}$ in
Figure~\ref{fig:all}.  Rules labeled UP compute derivations of
statements at one level of abstraction using context weights from the
level above to define priorities.  Rules labeled BASE and DOWN compute
contexts in one level of abstraction using derivation weights at the
same level to define priorities.  The rules labeled START1 and START2
start the inference by handling the most abstract level.

\begin{figure}
\parbox[t]{2cm}{START1: }
\parbox[t]{2.0in}{\vspace{-2.2em}
\irule{}
{}
{0}
{\ant{\bot = 0}}}

\parbox[t]{2cm}{START2: }
\parbox[t]{2.0in}{\vspace{-2.2em}
\irule{}
{}
{0}
{\ant{\context(\bot) = 0}}}

\parbox[t]{2cm}{BASE: }
\parbox[t]{2.0in}{\vspace{-2.2em}
\irule{}
{\ant{\goal_k = w}}
{w}
{\ant{\context(\goal_k) = 0}}}

\parbox[t]{2cm}{UP: }
\parbox[t]{2.0in}{\vspace{-2.2em}
\irule{}
{\ant{\context(\abs(C))=w_c}
\ant{A_1=w_1}
\ant{\hspace{1ex}\vdots}
\ant{A_n=w_n}}
{v + w_1+\cdots+w_n+w_c}
{\ant{C=v+w_1+\cdots+w_n}}}

\parbox[t]{2cm}{DOWN: }
\parbox[t]{2.0in}{\vspace{-2.2em}
\irule{}
{\ant{\context(C) = w_c}
\ant{A_1=w_1}
\ant{\hspace{1ex}\vdots}
\ant{A_n=w_n}}
{v+w_c+w_1+\cdots+w_n}
{\con{\context(A_i) = v+w_c+w_1+\cdots+w_n-w_i}}}

\caption{Prioritized rules ${\cal H}$ defining HA*LD.  BASE rules are
defined for $0 \le k \le m-1$.  UP and DOWN rules are defined for
each rule $A_1,\ldots,A_n \rightarrow_v C \in R_k$ with $0 \le k \le
m-1$.}
\label{fig:all}
\end{figure}

The execution of ${\cal H}$ starts by computing a derivation and
context for $\bot$ with START1 and START2.  It continues by deriving
statements in $\Sigma_{m-1}$ using UP rules.  Once the lightest
derivation of $\goal_{m-1}$ is found the algorithm derives a context
for $\goal_{m-1}$ with a BASE rule and starts computing contexts for
other statements in $\Sigma_{m-1}$ using DOWN rules.  In general HA*LD
interleaves the computation of derivations and contexts at each level
of abstraction since the execution of ${\cal H}$ uses a single
priority queue.  

Note that no computation happens at a given level of abstraction until
a lightest derivation of the goal has been found at the level above.
This means that the structure of the abstraction hierarchy can be
defined dynamically.  For example, as in the CFDP algorithm, we could
define the set of statements at each level of abstraction by refining
the statements that appear in a lightest derivation of the goal at the
level above.  Here we assume a static abstraction hierarchy.

For each statement $C \in \Sigma_k$ with $0 \le k \le m-1$ we use
$\ell(C)$ to denote the weight of a lightest derivation for $C$ using
$R_k$ while $\ell(\context(C))$ denotes the weight of a lightest
context for $C$ using $R_k$.  For the most abstract level we define
$\ell(\bot) = \ell(\context(\bot)) = 0$.

Below we show that HA*LD correctly computes lightest derivations and
lightest contexts at every level of abstraction.  Moreover, the order
in which derivations and contexts are expanded is controlled by a
heuristic function defined as follows.  For $C \in \Sigma_k$ with $0
\le k \le m-1$ define a heuristic value for $C$ using contexts at the
level above and a heuristic value for $\context(C)$ using derivations
at the same level,
$$\begin{array}{l}
h(C) = \ell(\context(\abs(C))), \\
h(\context(C)) = \ell(C).
\end{array}$$
For the most abstract level we define $h(\bot) = h(\context(\bot)) =
0$.  Let a generalized statement $\Phi$ be either an element of $\Sigma_k$ for $0 \le k \le m$ or
an expression of the form $\context(C)$ for $C \in \Sigma_k$.  We
define an intrinsic priority for $\Phi$ as follows,
$$p(\Phi) = \ell(\Phi) + h(\Phi).$$ 
For $C \in \Sigma_k$, we have that $p(\context(C))$ is the weight of
a lightest derivation of $\goal_k$ that uses $C$, while $p(C)$ is a lower
bound on this weight.  

The results from Sections \ref{sec:ALD} and \ref{sec:abs} cannot be
used directly to show the correctness of HA*LD.  This is because the
rules in Figure~\ref{fig:all} generate heuristic values at the same
time they generate derivations that depend on those heuristic values.
Intuitively we must show that during the execution of the prioritized
rules ${\cal H}$, each heuristic value is available at an appropriate
point in time.  The next lemma shows that the rules in ${\cal H}$
satisfy a monotonicity property with respect to the intrinsic
priority of generalized statements.  Theorem~\ref{thm:main}
proves the correctness of the hierarchical algorithm.

\begin{lemma}[Monotonicity]
For each rule $\Phi_1,\ldots,\Phi_m \rightarrow \Psi$ in the
hierarchical algorithm, if the weight of each antecedent $\Phi_i$ is
$\ell(\Phi_i)$ and the weight of the conclusion is $\alpha$ then
\begin{itemize}
\item[(a)] the priority of the rule is $\alpha+h(\Psi)$.
\item[(b)] $\alpha+h(\Psi) \ge p(\Phi_i)$.
\end{itemize}
\end{lemma}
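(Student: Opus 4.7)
My plan is to prove both (a) and (b) by a case analysis on the five rule schemas that make up ${\cal H}$: START1, START2, BASE, UP, and DOWN. For each schema I would substitute the assumed antecedent weights $\ell(\Phi_i)$, read off $\alpha$ from the conclusion line and the printed priority from the middle of the rule, verify that the priority equals $\alpha + h(\Psi)$ (that is part (a)), and then bound each $p(\Phi_i) = \ell(\Phi_i) + h(\Phi_i)$ by $\alpha + h(\Psi)$ (that is part (b)).

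The trivial cases dispose quickly. START1 and START2 have no antecedents, so (b) is vacuous; for (a), $\alpha$, $h(\Psi)$, and the printed priority are all $0$. For BASE at level $k$, the conclusion $\context(\goal_k)$ has $\alpha = 0$ while $h(\context(\goal_k)) = \ell(\goal_k) = w$, which matches the printed priority $w$. For (b), the only antecedent is $\goal_k$ with $p(\goal_k) = w + h(\goal_k)$; this collapses to $w$ because the empty tree is always a context for a goal statement, so $\ell(\context(\goal_{k+1})) = 0$, with $h(\bot) = 0$ handling the topmost level.

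The substantive cases are UP and DOWN. Part (a) is a matter of unfolding the definitions of $h$: in UP, $h(C) = \ell(\context(\abs(C))) = w_c$, so $\alpha + h(C) = v + \sum_j w_j + w_c$, matching the printed priority; in DOWN, $h(\context(A_i)) = \ell(A_i) = w_i$, so $\alpha + h(\Psi) = (v + w_c + \sum_j w_j - w_i) + w_i$, again matching. For (b), the context-valued antecedents $\context(\abs(C))$ in UP and $\context(C)$ in DOWN both reduce to the derivability bound $\ell(\abs(C)) \le \ell(C) \le v + \sum_j w_j$, using the abstraction inequality and the fact that the rule at level $k$ exhibits $v + \sum_j w_j$ as a derivable weight of $C$. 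The main obstacle, and the one place the abstraction structure really enters, is an antecedent $A_j$: here I need to show $\ell(\context(\abs(A_j))) \le v + \sum_{i\neq j} w_i + w_c$. I will establish this by applying the context rule in $c(R_{k+1})$ that comes from the abstract rule $\abs(A_1),\ldots,\abs(A_n) \rightarrow_{v'} \abs(C)$, using the hierarchy guarantees $v' \le v$ and $\ell(\abs(A_i)) \le \ell(A_i) = w_i$ to pass from the abstract weights back to the concrete ones. Adding $w_j$ to both sides then yields $p(A_j) = w_j + \ell(\context(\abs(A_j))) \le \alpha + h(\Psi)$, completing (b).
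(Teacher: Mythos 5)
Your proposal is correct and follows essentially the same route as the paper's proof: a case analysis over the five rule schemas, with part (a) by unfolding the definitions of $h$ and part (b) by bounding each antecedent's intrinsic priority using the abstract context rules together with the inequalities $v' \le v$, $\ell(\abs(A)) \le \ell(A)$ and $\ell(\context(\abs(C))) \le \ell(\context(C))$ --- you in fact spell out the context-rule step that the paper compresses to ``we can show that.'' The only detail to add is the degenerate case $\abs(A_j) = \bot$ for UP/DOWN antecedents at the top level, where there is no abstract rule set to invoke and the bound instead follows directly from the convention $\ell(\context(\bot)) = 0$, which the paper treats as a separate subcase.
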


\begin{proof}
For the rules START1 and START2 the result follows from the fact that
the rules have no antecedents and $h(\bot) = h(\context(\bot)) = 0$.

Consider a rule labeled BASE with $w=\ell(\goal_k)$.  To see (a) note
that $\alpha$ is always zero and the priority of the rule is $w =
h(\context(\goal_k))$.  For (b) we note that $p(\goal_k) =
\ell(\goal_k)$ which equals the priority of the rule.

Now consider a rule labeled UP with $w_c=\ell(\context(\abs(C)))$ and
$w_i = \ell(A_i)$ for all $i$.  For part (a) note how the priority of
the rule is $\alpha + w_c$ and $h(C) = w_c$.  For part (b)
consider the first antecedent of the rule.  We have
$h(\context(\abs(C))) = \ell(\abs(C)) \le \ell(C) \le \alpha$, and
$p(\context(\abs(C))) = w_c + h(\context(\abs(C))) \le \alpha + w_c$.
Now consider an antecedent $A_i$.  If $\abs(A_i) = \bot$ then $p(A_i)
= w_i \le \alpha + w_c$.  If $\abs(A_i) \neq \bot$ then we can show
that $h(A_i) = \ell(\context(\abs(A_i))) \le w_c + \alpha - w_i$.
This implies that $p(A_i) = w_i + h(A_i) \le \alpha + w_c$.

Finally consider a rule labeled DOWN with $w_c=\ell(\context(C))$ and
$w_j = \ell(A_j)$ for all $j$.  For part (a) note that the priority of
the rule is $\alpha + w_i$ and $h(\context(A_i)) = w_i$.  For part (b)
consider the first antecedent of the rule.  We have $h(\context(C)) =
\ell(C) \le v + \sum_j w_j$ and we see that $p(\context(C)) = w_c +
h(C) \le \alpha + w_i$.  Now consider an antecedent $A_j$.  If
$\abs(A_j) = \bot$ then $h(A_j) = 0$ and $p(A_j) = w_j \le \alpha +
w_i$.  If $\abs(A_j) \neq \bot$ we can show that $h(A_j) \le \alpha +
w_i - w_j$.  Hence $p(A_j) = w_j + h(A_j) \le \alpha + w_i$.
\end{proof}

\begin{theorem}
The execution of ${\cal H}$ maintains the following invariants.
\begin{enumerate}
\item If $(\Phi=w) \in {\cal S}$ then $w = \ell(\Phi)$.
\item If $(\Phi=w) \in {\cal Q}$ then it has priority $w+h(\Phi)$.
\item If $p(\Phi) < p({\cal Q})$ then $(\Phi=\ell(\Phi)) \in {\cal S}$
\end{enumerate}
Here $p({\cal Q})$ denotes the smallest priority in ${\cal Q}$.
\label{thm:main}
\end{theorem}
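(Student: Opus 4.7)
The plan is to prove all three invariants simultaneously by induction on the iterations of the main loop, making essential use of the monotonicity lemma just established. A key auxiliary fact that follows from the inductive hypothesis together with invariant~1 for the just-processed element is that the minimum priority of ${\cal Q}$ is non-decreasing across each step: removing an element cannot lower the minimum, and every newly enqueued assignment arises from a rule whose antecedents necessarily include the freshly-added $(\Phi_0 = w_0)$ to ${\cal S}$ (else the rule would have fired at an earlier step), so monotonicity lemma part~(b) lower-bounds the new priority by $p(\Phi_0) = w_0 + h(\Phi_0)$, which equals the current minimum $p^*$ once $w_0 = \ell(\Phi_0)$ is established. The base case is immediate: ${\cal Q}$ initially contains only the START1 and START2 assignments, both with weight and priority zero, matching $h(\bot) = h(\context(\bot)) = 0$, and ${\cal S}$ is empty.

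For the inductive step, remove the minimum priority element $(\Phi_0 = w_0)$ at priority $p^*$. If $\Phi_0$ already has a weight in ${\cal S}$ the three invariants are preserved trivially. Otherwise, the heart of the proof is showing $w_0 = \ell(\Phi_0)$. The inequality $w_0 \geq \ell(\Phi_0)$ is immediate since each rule in ${\cal H}$ simulates a valid derivation or context construction in the underlying problem (UP mirrors rules in $R_k$, while BASE and DOWN produce valid contexts). For the converse, apply a frontier argument: take a lightest ${\cal H}$-derivation $T$ of $(\Phi_0 = \ell(\Phi_0))$ and let $\Psi$ be the deepest node of $T$ not already in ${\cal S}$. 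All children of $\Psi$ in $T$ lie in ${\cal S}$ at their true lightest weights by the pre-step invariant~1, so the rule producing $\Psi$ in $T$ has fired, placing $(\Psi = \ell(\Psi))$ into ${\cal Q}$ at priority $p(\Psi)$ by invariant~2. Iterating monotonicity lemma part~(b) along the path from $\Psi$ up to $\Phi_0$ in $T$ gives $p(\Psi) \leq p(\Phi_0) = \ell(\Phi_0) + h(\Phi_0) \leq p^*$. Since $p^*$ is the minimum of ${\cal Q}$, the entire chain collapses to equality, forcing $w_0 = \ell(\Phi_0)$. Invariant~2 for each newly enqueued element then follows directly from monotonicity lemma part~(a), now that every firing rule has its antecedents at their lightest weights.

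For invariant~3, let $p'$ denote the new minimum of ${\cal Q}$ after this step; by the preliminary observation, $p' \geq p^*$. Suppose $p(\Phi) < p'$ for some generalized statement $\Phi$; we must show $(\Phi = \ell(\Phi)) \in {\cal S}$. Run the same frontier argument on a lightest ${\cal H}$-derivation of $\Phi$: if some deepest $\Psi$ were not in ${\cal S}$, the rule producing $\Psi$ would have placed $(\Psi = \ell(\Psi))$ into ${\cal Q}$ at priority $p(\Psi) \leq p(\Phi) < p'$, contradicting that $p'$ is the minimum. Hence every node of the derivation, including $\Phi$, lies in ${\cal S}$ with its lightest weight by invariant~1. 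The main obstacle in the whole argument is the tie-breaking case $p(\Psi) = p^*$ in the first frontier argument, where in principle one could have removed $\Psi$ before $\Phi_0$; we sidestep any specific tie-breaking rule by noting that the chain $p(\Psi) \leq p(\Phi_0) \leq p^*$ must then already collapse to equalities throughout, which directly pins down $w_0 = \ell(\Phi_0)$.
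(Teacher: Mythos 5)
Your overall skeleton---induction on iterations, soundness giving $w_0 \ge \ell(\Phi_0)$, part (a) of the monotonicity lemma for invariant 2, and a non-decreasing queue minimum---matches the paper. The genuine gap is in how you discharge invariants 1 and 3: you run a frontier argument over ``a lightest ${\cal H}$-derivation $T$ of $\Phi$'' and implicitly assume (i) such a tree exists and has weight $\ell(\Phi)$, and (ii) every subtree of it is itself a lightest derivation, so that when the children of the frontier node $\Psi$ sit in ${\cal S}$ at their $\ell$-values the firing rule produces exactly $(\Psi = \ell(\Psi))$. Neither (i) nor (ii) is automatic for ${\cal H}$. The quantities $\ell(C)$ and $\ell(\context(C))$ are defined via the additive rule sets $R_k$ and $c(R_k)$, for which the paper can invoke the correctness of Knuth's algorithm to get existence of lightest derivations with optimal subtrees; the rules of ${\cal H}$ are neither additive nor superior (an UP conclusion ignores the weight of its $\context(\abs(C))$ antecedent, and a DOWN conclusion is $v + w_c + \sum_j w_j - w_i$), so that appeal does not transfer. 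Moreover, an ${\cal H}$-derivation of a concrete statement must contain, as subtrees, derivations of abstract contexts at every level above it, and you never argue that these context antecedents are ${\cal H}$-derivable at all, let alone at the weights your identification of $\ell$ with ``lightest ${\cal H}$-derivation weight'' requires.

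The paper's proof is structured precisely to avoid this obligation: invariant 1 is dispatched in two lines from the pre-step invariants 2 and 3 (no frontier argument is needed there), and invariant 3 is proved by reverse induction on the abstraction level, with a structural induction on lightest $R_k$- (resp.\ $c(R_k)$-) derivations inside each level; the cross-level antecedent $\context(\abs(C))$ of each UP rule is then supplied by the inductive hypothesis at level $k+1$ rather than by a subtree of one global tree. Your argument could likely be repaired by proving, as a separate lemma, that every $\Phi$ with $p(\Phi) < \infty$ admits a finite ${\cal H}$-derivation of weight $\ell(\Phi)$ all of whose subtrees are lightest---but establishing that lemma essentially requires the same level-by-level induction you are trying to bypass, so as written the proposal is incomplete. (Your closing remark about tie-breaking is not needed in either version: once $(\Psi=\ell(\Psi))$ is shown to be in ${\cal Q}$ at priority at most $p^*$, the equality chain you describe already yields $w_0=\ell(\Phi_0)$ without any assumption on which tied element is removed first.)
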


\begin{proof}
In the initial state of the algorithm ${\cal S}$ is empty and ${\cal
Q}$ contains only $(\bot = 0)$ and $(\context(\bot)=0)$ at priority
$0$.  For the initial state invariant 1 is true since ${\cal S}$ is
empty; invariant 2 follows from the definition of $h(\bot)$ and
$h(\context(\bot))$; and invariant 3 follows from the fact that
$p({\cal Q}) = 0$ and $p(\Phi) \ge 0$ for all $\Phi$.  Let ${\cal S}$
and ${\cal Q}$ denote the state of the algorithm immediately prior to
an iteration of the while loop in Figure~\ref{fig:algo} and suppose
the invariants are true.  Let ${\cal S}'$ and ${\cal Q}'$ denote the
state of the algorithm after the iteration.  

We will first prove invariant 1 for ${\cal S}'$.  Let $(\Phi=w)$ be
the element removed from ${\cal Q}$ in this iteration.  By the
soundness of the rules we have $w \geq \ell(\Phi)$. If $w =
\ell(\Phi)$ then clearly invariant 1 holds for ${\cal S}'$.  If $w >
\ell(\Phi)$ invariant 2 implies that $p({\cal Q}) = w + h(\Phi) >
\ell(\Phi)+h(\Phi)$ and by invariant 3 we know that ${\cal S}$
contains $(\Phi=\ell(\Phi))$.  In this case ${\cal S}' = {\cal S}$.

Invariant 2 for ${\cal Q}'$ follows from invariant 2 for ${\cal Q}$,
invariant 1 for ${\cal S}'$, and part (a) of the monotonicity lemma.

Finally, we consider invariant 3 for ${\cal S}'$ and ${\cal Q}'$.  The
proof is by reverse induction on the abstraction level of $\Phi$.  We
say that $\Phi$ has level $k$ if $\Phi \in \Sigma_k$ or $\Phi$ is of
the form $\context(C)$ with $C \in \Sigma_k$.  In the reverse
induction, the base case considers $\Phi$ at level
$m$.  Initially the algorithm inserts $(\bot = 0)$ and
$(\context(\bot) = 0)$ in the queue with priority $0$.  If $p({\cal Q}')
> 0$ then ${\cal S}'$ must contain $(\bot = 0)$ and $(\context(\bot) =
0)$.  Hence invariant 3 holds for ${\cal S}'$ and ${\cal Q}'$ with $\Phi$
at level $m$.

Now we assume that invariant 3 holds for ${\cal S}'$ and ${\cal Q}'$
with $\Phi$ at levels greater than $k$ and consider level $k$.  We
first consider statements $C \in \Sigma_k$.  Since the rules $R_k$
are additive, every statement $C$ derivable with $R_k$ has a lightest
derivation (a derivation with weight $\ell(C)$).  This follows
from the correctness of Knuth's algorithm.  Moreover, for additive
rules, subtrees of lightest derivations are also lightest derivations.
We show by structural induction that for any lightest derivation with
conclusion $C$ such that $p(C) < p({\cal Q}')$ we have $(C = \ell(C))
\in {\cal S}'$.  Consider a lightest derivation in $R_k$ with
conclusion $C$ such that $p(C) < p({\cal Q}')$.  The final rule in
this derivation $A_1,\ldots,A_n \rightarrow_v C$ corresponds to an UP
rule where we add an antecedent for $\context(\abs(C))$.  By part (b)
of the monotonicity lemma all the antecedents of this UP rule have
intrinsic priority less than $p({\cal Q}')$.  By the induction
hypothesis on lightest derivations we have $(A_i = \ell(A_i)) \in
{\cal S}'$.  Since invariant 3 holds for statements at levels greater
than $k$ we have $(\context(\abs(C)) = \ell(\context(\abs(C)))) \in
{\cal S}'$.  This implies that at some point the UP rule was used to
derive $(C=\ell(C))$ at priority $p(C)$.  But $p(C) < p({\cal Q}')$
and hence this item must have been removed from the queue.  Therefore
${\cal S}'$ must contain $(C=w)$ for some $w$ and, by invariant 1,
$w=\ell(C)$.

Now we consider $\Phi$ of the form $\context(C)$ with $C \in
\Sigma_k$.  As before we see that $c(R_k)$ is additive and thus every
statement derivable with $c(R_k)$ has a lightest derivation and
subtrees of lightest derivations are lightest derivations themselves.
We prove by structural induction that for any lightest derivation $T$
with conclusion $\context(C)$ such that $p(\context(C)) < p({\cal
Q}')$ we have $(\context(C) = \ell(\context(C))) \in {\cal S}'$.
Suppose the last rule of $T$ is of the form,
$$\context(C),A_1,\ldots,A_{i-1},A_{i+1},\ldots,A_n \rightarrow_v
\context(A_i).$$ This rule corresponds to a DOWN rule where we add an
antecedent for $A_i$.  By part (b) of the monotonicity lemma all the
antecedents of this DOWN rule have intrinsic priority less than
$p({\cal Q}')$.  By invariant 3 for statements in $\Sigma_k$ and by
the induction hypothesis on lightest derivations using $c(R_k)$, all
antecedents of the DOWN rule have their lightest weight in ${\cal
S}'$.  So at some point $(\context(A_i) = \ell(\context(A_i)))$ was
derived at priority $p(A_i)$.  Now $p(A_i) < p({\cal Q}')$ implies the
item was removed from the queue and, by invariant 1, we have
$(\context(A_i) = \ell(\context(A_i))) \in {\cal S}'$.

Now suppose the last (and only) rule in $T$ is $\rightarrow_0
\context(\goal_k)$.  This rule corresponds to a BASE rule where we add
$\goal_k$ as an antecedent.  Note that $p(\goal_k) = \ell(\goal_k) = 
p(\context(\goal_k))$ and hence $p(\goal_k) < p({\cal Q}')$.  By
invariant 3 for statements in $\Sigma_k$ we have $(\goal_k =
\ell(\goal_k))$ in ${\cal S}'$ and at some point the BASE rule was
used to queue $(\context(\goal_k) = \ell(\context(\goal_k)))$ at
priority $p(\context(\goal_k))$.  As in the previous cases
$p(\context(\goal_k)) < p({\cal Q}')$ implies $(\context(\goal_k) =
\ell(\context(\goal_k))) \in {\cal S}'$.
\end{proof}

The last theorem implies that generalized statements $\Phi$ are
expanded in order of their intrinsic priority.  Let $K$ be the number
of statements $C$ in the entire abstraction hierarchy with $p(C) \le
p(\goal) = \ell(\goal)$.  For every statement $C$ we have that $p(C)
\le p(\context(C))$.  We conclude that HA*LD expands at most $2K$
generalized statements before computing a lightest derivation of
$\goal$.

\subsection{Example}

Now we consider the execution of HA*LD in a specific example.  The
example illustrates how HA*LD interleaves the computation of
structures at different levels of abstraction.  Consider the following
abstraction hierarchy with 2 levels.
$$\Sigma_0 = \{X_1,\ldots,X_n,Y_1,\ldots,Y_n,Z_1,\ldots,Z_n,\goal_0\}, \;\;\;
\Sigma_1 = \{X,Y,Z,\goal_1\},$$
$$R_0 = \left\{
\begin{array}{l}
\rightarrow_i X_i, \\
\rightarrow_i Y_i, \\
X_i,Y_j \rightarrow_{i*j} \goal_0, \\
X_i,Y_i \rightarrow_5 Z_i, \\
Z_i \rightarrow_i \goal_0,
\end{array}
\right\}, \;\;\;
R_1 = \left\{
\begin{array}{l}
\rightarrow_1 X, \\
\rightarrow_1 Y, \\
X,Y \rightarrow_1 \goal_1, \\
X,Y \rightarrow_5 Z, \\
Z \rightarrow_1 \goal_1,
\end{array}
\right\},$$ $$ $$ with $\abs(X_i) = X$, $\abs(Y_i) = Y$, $\abs(Z_i) =
Z$ and $\abs(goal_0) = goal_1$.  

\begin{enumerate}
\item Initially ${\cal S} = \emptyset$ and ${\cal Q} = \{(\bot=0)
\text{ and } (\context(\bot)=0) \text{ at priority 0}\}$.

\item When $(\bot = 0)$ comes off the queue it gets put in ${\cal S}$
but nothing else happens.

\item When $(\context(\bot) = 0)$ comes off the queue it gets put in
${\cal S}$.  Now statements in $\Sigma_1$ have an abstract context
in ${\cal S}$.  This causes UP rules that come from rules in $R_1$
with no antecedents to ``fire'', putting $(X = 1)$ and $(Y = 1)$ in
${\cal Q}$ at priority 1.

\item When $(X = 1)$ and $(Y = 1)$ come off the queue they get put in
${\cal S}$, causing two UP rules to fire, putting $(\goal_1 = 3)$ at
priority 3 and $(Z = 7)$ at priority 7 in the queue.

\item We have,
$${\cal S} = \{(\bot = 0),\; (\context(\bot) = 0),\; (X = 1),\;
(Y = 1)\}$$ $${\cal Q} = \{(\goal_1 = 3) \text{ at priority 3},\; (Z =
7) \text{ at priority 7}\}$$

\item At this point $(\goal_1 = 3)$ comes off the queue and goes into
in ${\cal S}$.  A BASE rule fires putting $(\context(\goal_1) = 0)$ in
the queue at priority 3.

\item $(\context(\goal_1) = 0)$ comes off the queue.  This is the base
case for contexts in $\Sigma_1$.  Two DOWN rules use
$(\context(\goal_1) = 0)$, $(X = 1)$ and $(Y = 1)$ to put
$(\context(X) = 2)$ and $(\context(Y) = 2)$ in ${\cal Q}$ at priority
3.

\item $(\context(X) = 2)$ comes off the queue and gets put in ${\cal
S}$.  Now we have an abstract context for each $X_i \in
\Sigma_0$, so UP rules to put $(X_i = i)$ in ${\cal Q}$ at
priority $i+2$.

\item Now $(\context(Y) = 2)$ comes off the queue and goes into 
${\cal S}$.  As in the previous step UP rules put $(Y_i = i)$ in
${\cal Q}$ at priority $i+2$.

\item We have,
\begin{eqnarray*}
{\cal S} & = & \{(\bot = 0),\; (\context(\bot) = 0),\; (X = 1),\; (Y =
1),\; (\goal_1 = 3),\; \\ & & (\context(\goal_1) = 0),\; (\context(X) = 2),\;
(\context(Y) = 2)\}
\end{eqnarray*}
$${\cal Q} = \{(X_i = i) \text{ and } (Y_i = i) \text{ at
  priority $i+2$ for $1 \le i \le n$},\; (Z = 7) \text{ at priority 7}\}$$

\item Next both $(X_1 = 1)$ and $(Y_1 = 1)$ will come off the queue
and go into ${\cal S}$.  This causes an UP rule to put $(\goal_0 = 3)$
in the queue at priority 3.

\item $(\goal_0 = 3)$ comes off the queue and goes into ${\cal S}$.
The algorithm can stop now since we have a derivation of
the most concrete goal.
\end{enumerate}

Note how HA*LD terminates before fully computing abstract derivations
and contexts.  In particular $(Z=7)$ is in ${\cal Q}$ but $Z$ was
never expanded.  Moreover $\context(Z)$ is not even in the queue.  If
we keep running the algorithm it would eventually derive
$\context(Z)$, and that would allow the $Z_i$ to be derived.  

\section{The Perception Pipeline}
\label{sec:pipeline}

Figure~\ref{fig:pipeline} shows a hypothetical run of the hierarchical
algorithm for a processing pipeline of a vision system.  In this
system weighted statements about edges are used to derive weighted
statements about contours which provide input to later stages
ultimately resulting in statements about recognized objects.

It is well known that the subjective presence of edges at a particular
image location can depend on the context in which a given image patch
appears.  This can be interpreted in the perception pipeline by
stating that higher level processes --- those later in the pipeline
--- influence low-level interpretations.  This kind of influence
happens naturally in a lightest derivation problem.  For example, the
lightest derivation of a complete scene analysis might require the
presence of an edge that is not locally apparent.  By implementing the
whole system as a single lightest derivation problem we avoid the need to
make hard decisions between stages of the pipeline.

\begin{figure}
\centering \includegraphics[width=3.3in]{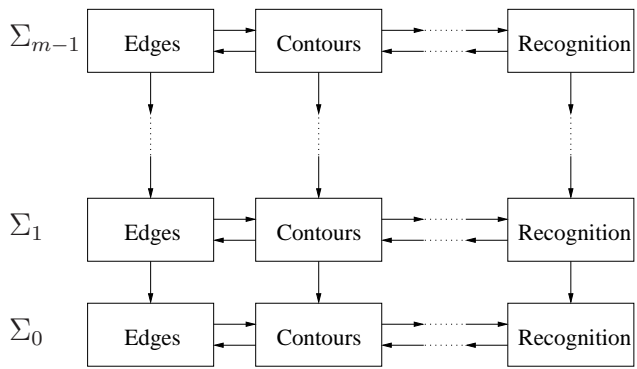}
\caption{A vision system with several levels of processing.  Forward
  arrows represent the normal flow of information from one stage of
  processing to the next.  Backward arrows represent the computation
  of contexts.  Downward arrows represent the influence of contexts.}
\label{fig:pipeline}
\end{figure}

The influence of late pipeline stages in guiding earlier stages is
pronounced if we use HA*LD to compute lightest derivations. In this
case the influence is apparent not only in the structure of the
optimal solution but also in the flow of information across different
stages of processing.  In HA*LD a complete interpretation derived at
one level of abstraction guides all processing stages at a more
concrete level.  Structures derived at late stages of the pipeline
guide earlier stages through abstract context weights.  This allows
the early processing stages to concentrate computational efforts in
constructing structures that will likely be part of the globally
optimal solution.

While we have emphasized the use of admissible heuristics, we note
that the A* architecture, including HA*LD, can also be used with
inadmissible heuristic functions (of course this would break our
optimality guarantees).  Inadmissible heuristics are important because
admissible heuristics tend to force the first few stages of a
processing pipeline to generate too many derivations.  As derivations
are composed their weights increase and this causes a large number of
derivations to be generated at the first few stages of processing
before the first derivation reaches the end of the pipeline.
Inadmissible heuristics can produce behavior similar to beam search
--- derivations generated in the first stage of the pipeline can flow
through the whole pipeline quickly.  A natural way to construct
inadmissible heuristics is to simply ``scale-up'' an admissible
heuristic such as the ones obtained from abstractions.  It is then
possible to construct a hierarchical algorithm where inadmissible
heuristics obtained from one level of abstraction are used to guide
search at the level below.

\section{Other Hierarchical Methods}
\label{sec:hierarchical}

In this section we compare HA*LD to other hierarchical search methods.

\subsection{Coarse-to-Fine Dynamic Programming}

HA*LD is related to the coarse-to-fine dynamic programming (CFDP)
method described by \citeA{Raphael01}.  To understand the relationship
consider the problem of finding the shortest path from $s$ to $t$ in a
trellis graph like the one shown in Figure~\ref{fig:cfdp}(a).  Here we
have $k$ columns of $n$ nodes and every node in one column is
connected to a constant number of nodes in the next column.  Standard
dynamic programming can be used to find the shortest path in $O(kn)$
time.  Both CFDP and HA*LD can often find the shortest path much
faster.  On the other hand the worst case behavior of these algorithms
is very different as we describe below, with CFDP taking
significantly more time than HA*LD.

The CFDP algorithm works by coarsening the graph, grouping nodes in
each column into a small number of supernodes as illustrated in
Figure~\ref{fig:cfdp}(b).  The weight of an edge between two
supernodes $A$ and $B$ is the minimum weight between nodes $a \in A$
and $b \in B$.  The algorithm starts by using dynamic programming to
find the shortest path $P$ from $s$ to $t$ in the coarse graph, this
is shown in bold in Figure~\ref{fig:cfdp}(b).  The supernodes along
$P$ are partitioned to define a finer graph as shown in
Figure~\ref{fig:cfdp}(c) and the procedure repeated.  Eventually the
shortest path $P$ will only go through supernodes of size one,
corresponding to a path in the original graph.  At this point we know
that $P$ must be a shortest path from $s$ to $t$ in the original
graph.  In the best case the optimal path in each iteration will be a
refinement of the optimal path from the previous iteration.  This
would result in $O(\log n)$ shortest paths computations, each in fairly
coarse graphs.  On the other hand, in the worst case CFDP will take
$\Omega(n)$ iterations to refine the whole graph, and many of the
iterations will involve finding shortest paths in large
graphs.  In this case CFDP takes $\Omega(kn^2)$ time which is
much worst than the standard dynamic programming approach.

Now suppose we use HA*LD to find the shortest path from $s$ to $t$ in
a graph like the one in Figure~\ref{fig:cfdp}(a).  We can build an
abstraction hierarchy with $O(\log n)$ levels where each supernode at
level $i$ contains $2^i$ nodes from one column of the original graph.
The coarse graph in Figure~\ref{fig:cfdp}(b) represents the highest
level of this abstraction hierarchy.  Note that HA*LD will consider a
small number, $O(\log n)$, of predefined graphs while CFDP can end up
considering a much larger number, $\Omega(n)$, of graphs.  In the best
case scenario HA*LD will expand only the nodes that are in the
shortest path from $s$ to $t$ at each level of the hierarchy.  In the
worst case HA*LD will compute a lightest path and context for every
node in the hierarchy (here a context for a node $v$ is a path from
$v$ to $t$).  At the $i$-th abstraction level we have a graph with
$O(kn/2^i)$ nodes and edges.  HA*LD will spend at most
$O(kn\log(kn)/2^i)$ time computing paths and contexts at level $i$.
Summing over levels we get at most $O(kn \log(kn))$ time total, which
is not much worst than the $O(kn)$ time taken by the standard dynamic
programming approach.

\begin{figure}
\centering 
\begin{tabular}{c}
\includegraphics[width=2.64in]{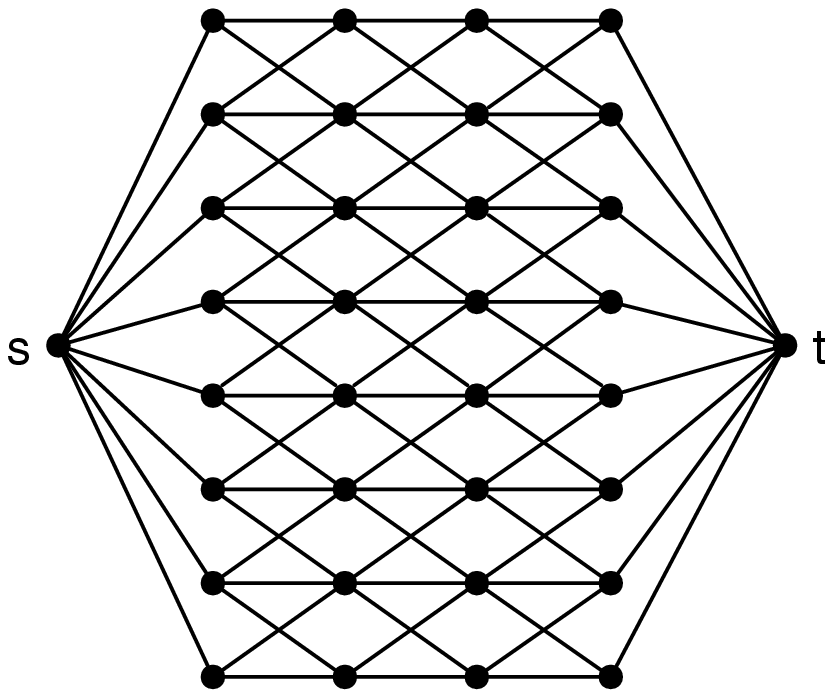} \\
(a)
\end{tabular}
\begin{tabular}{cc}
\includegraphics[width=2.64in]{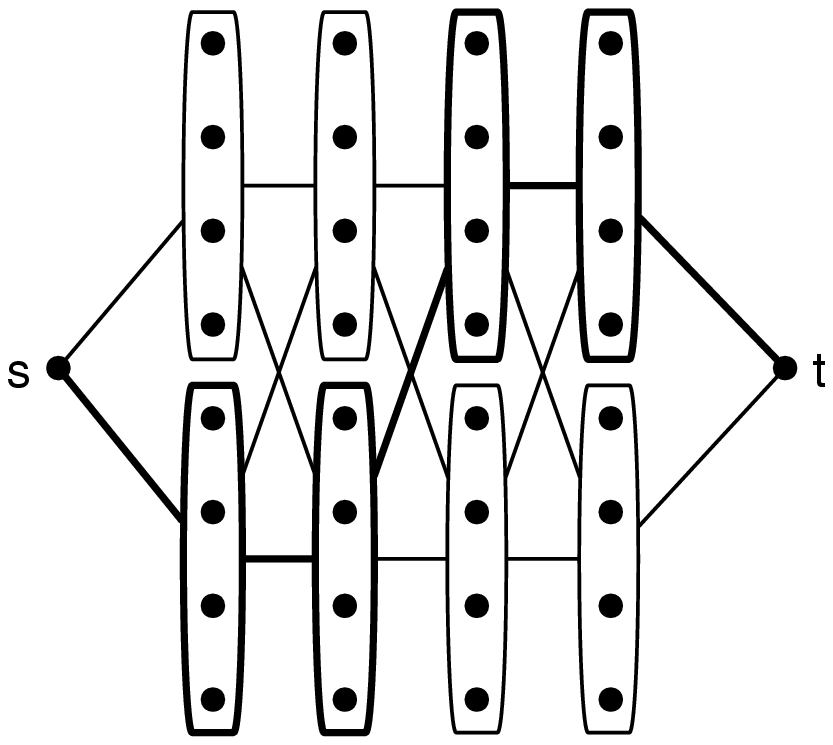} &
\includegraphics[width=2.64in]{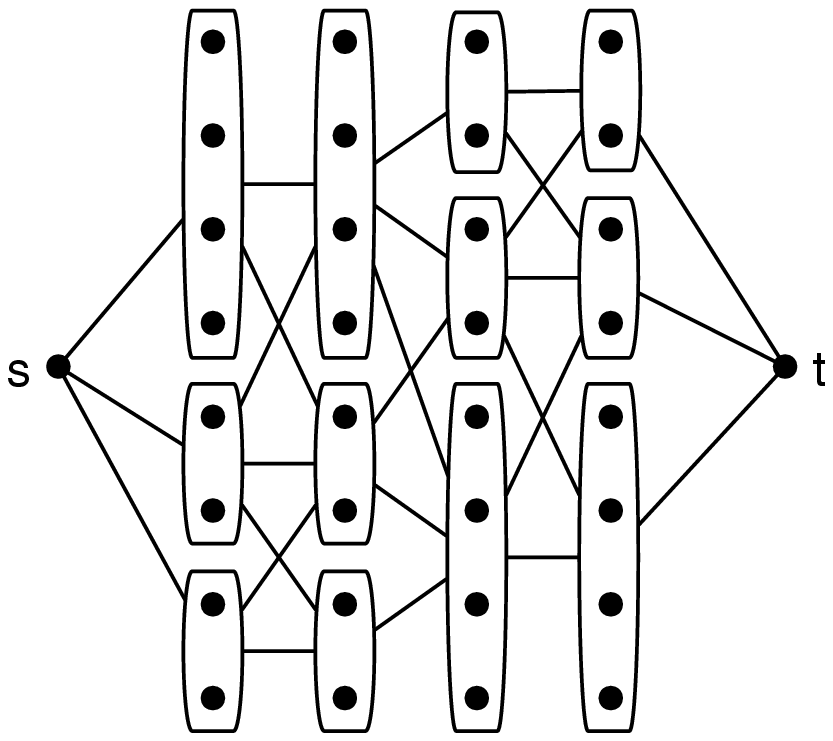} \\
(b) & (c)
\end{tabular}
\caption{(a) Original dynamic programming graph.  (b) Coarse graph
with shortest path shown in bold.  (c) Refinement of the coarse graph
along the shortest path.}
\label{fig:cfdp}
\end{figure}

\subsection{Hierarchical Heuristic Search}

Our hierarchical method is also related to the HA* and HIDA*
algorithms described by \citeA{Holte96} and \citeA{Holte05}.  These
methods are restricted to shortest paths problems but they also use a
hierarchy of abstractions.  A heuristic function is defined for each
level of abstraction using shortest paths to the goal at the level
above.  The main idea is to run A* or IDA* to compute a shortest path
while computing heuristic values on-demand.  Let $\abs$ map a node to
its abstraction and let $g$ be the goal node in the concrete graph.
Whenever the heuristic value for a concrete node $v$ is needed we call
the algorithm recursively to find the shortest path from $\abs(v)$ to
$\abs(g)$.  This recursive call uses heuristic values defined from a
further abstraction, computed through deeper recursive calls.

It is not clear how to generalize HA* and HIDA* to lightest derivation
problems that have rules with multiple antecedents.  Another
disadvantage is that these methods can potentially ``stall'' in the
case of directed graphs.  For example, suppose that when using HA* or
HIDA* we expand a node with two successors $x$ and $y$, where $x$ is
close to the goal but $y$ is very far.  At this point we need a
heuristic value for $x$ and $y$, and we might have to spend a long time
computing a shortest path from $\abs(y)$ to $\abs(g)$.  On the other
hand, HA*LD would not wait for this shortest path to be fully
computed.  Intuitively HA*LD would compute shortest paths from
$\abs(x)$ and $\abs(y)$ to $\abs(g)$ simultaneously.  As soon as the
shortest path from $\abs(x)$ to $\abs(g)$ is found we can start
exploring the path from $x$ to $g$, independent of how long it would
take to compute a path from $\abs(y)$ to $\abs(g)$.

\section{Convex Object Detection}
\label{sec:experiments}

Now we consider an application of HA*LD to the problem of detecting
convex objects in images.  We pose the problem using a formulation
similar to the one described by \citeA{Raphael01}, where the optimal
convex object around a point can be found by solving a shortest path
problem.  We compare HA*LD to other search methods, including CFDP and
A* with pattern databases.  The results indicate that HA*LD performs
better than the other methods over a wide range of inputs.

Let $x$ be a reference point inside a convex object.  We can represent
the object boundary using polar coordinates with respect to a
coordinate system centered at $x$.  In this case the object is
described by a periodic function $r(\theta)$ specifying the distance
from $x$ to the object boundary as a function of the angle $\theta$.
Here we only specify $r(\theta)$ at a finite number of angles
$(\theta_0,\ldots,\theta_{N-1})$ and assume the boundary is a straight
line segment between sample points.  We also assume the object is
contained in a ball of radius $R$ around $x$ and that $r(\theta)$ is
an integer.  Thus an object is parametrized by $(r_0,\ldots,r_{N-1})$
where $r_i \in [0,R-1]$.  An example with $N=8$ angles is shown in
Figure~\ref{fig:convex}.

\begin{figure}
\centering \includegraphics[width=3in]{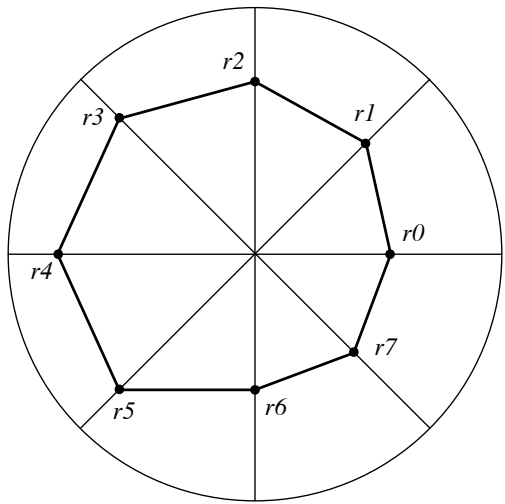}
\caption{A convex set specified by a hypothesis $(r_0,\ldots,r_7)$.}
\label{fig:convex}
\end{figure}

Not every hypothesis $(r_0,\ldots,r_{N-1})$ specifies a convex object.
The hypothesis describes a convex set exactly when the object boundary
turns left at each sample point $(\theta_i,r_i)$ as $i$ increases.
Let $C(r_{i-1},r_i,r_{i+1})$ be a Boolean function indicating when
three sequential values for $r(\theta)$ define a boundary that is
locally convex at $i$.  The hypothesis $(r_0,\ldots,r_{N-1})$ is
convex when it is locally convex at each $i$.\footnote{This
parametrization of convex objects is similar but not identical to the
one used by \citeA{Raphael01}.}

Throughout this section we assume that the reference point $x$ is
fixed in advance.  Our goal is to find an ``optimal'' convex object
around a given reference point.  In practice reference locations can
be found using a variety of methods such as a Hough transform.

Let $D(i,r_i,r_{i+1})$ be an image data cost measuring the evidence
for a boundary segment from $(\theta_i,r_i)$ to
$(\theta_{i+1},r_{i+1})$.  We consider the problem of finding a convex
object for which the sum of the data costs along the whole boundary is
minimal.  That is, we look for a convex hypothesis minimizing the
following energy function,
$$E(r_0,\ldots,r_{N-1}) = \sum_{i=0}^{N-1} D(i,r_i,r_{i+1}).$$

The data costs can be precomputed and specified by a lookup table with
$O(NR^2)$ entries.  In our experiments we use a data cost based on the
integral of the image gradient along each boundary segment.  Another
approach would be to use the data term described by \citeA{Raphael01} where the
cost depends on the contrast between the inside and the outside of the
object measured within the pie-slice defined by $\theta_i$ and
$\theta_{i+1}$.

An optimal convex object can be found using standard dynamic
programming techniques.  Let $B(i,r_0,r_1,r_{i-1},r_i)$ be the cost of
an optimal partial convex object starting at $r_0$ and $r_1$ and
ending at $r_{i-1}$ and $r_{i}$.  Here we keep track of the last two
boundary points to enforce the convexity constraint as we extend
partial objects.  We also have to keep track of the first two boundary
points to enforce that $r_N=r_0$ and the convexity constraint at
$r_0$.  We can compute $B$ using the recursive formula,
\begin{align*}
B(1,r_0,r_1,r_0,r_1) = & \; D(0,r_0,r_1), \\
B(i+1,r_0,r_1,r_i,r_{i+1}) = & \; \min_{r_{i-1}} B(i,r_0,r_1,r_{i-1},r_i)
+ D(i, r_i, r_{i+1}),
\end{align*}
where the minimization is over choices for $r_{i-1}$ such that
$C(r_{i-1},r_i,r_{i+1}) = \true$.  The cost of an optimal object is
given by the minimum value of $B(N,r_0,r_1,r_{N-1},r_0)$ such that
$C(r_{N-1},r_0,r_1) = \true$.  An optimal object can be found by
tracing-back as in typical dynamic programming algorithms.  The main
problem with this approach is that the dynamic programming table has
$O(NR^4)$ entries and it takes $O(R)$ time to compute each entry.  The
overall algorithm runs in $O(NR^5)$ time which is quite slow.

Now we show how optimal convex objects can be defined in terms of a
lightest derivation problem.  Let $\convex{i,r_0,r_1,r_{i-1},r_i}$
denote a partial convex object starting at $r_0$ and $r_1$ and ending
at $r_{i-1}$ and $r_{i}$.  This corresponds to an entry in the dynamic
programming table described above.  Define the set of statements,
$$\Sigma = \{\convex{i,a,b,c,d} \;|\; i \in [1,N],\; a,b,c,d \in
[0,R-1] \} \cup \{\goal\}.$$ An optimal convex object corresponds to
a lightest derivations of $\goal$ using the rules in
Figure~\ref{fig:convex-rules}.  The first set of rules specify the
cost of a partial object from $r_0$ to $r_1$.  The second set of rules
specify that an object ending at $r_{i-1}$ and $r_i$ can be extended
with a choice for $r_{i+1}$ such that the boundary is locally convex
at $r_i$.  The last set of rules specify that a complete convex object
is a partial object from $r_0$ to $r_N$ such that $r_N = r_0$ and the
boundary is locally convex at $r_0$.

\begin{figure}
(1) for $r_0,r_1 \in [0,R-1]$,

\parbox{.2cm}{\hspace{.2cm}}
\parbox{2cm}{\irule{}{}{}
{\con{\convex{1,r_0,r_1,r_0,r_1} = D(0,r_0,r_1)}}}

\vspace{.2cm}
(2) for $r_0,r_1,r_{i-1},r_i,r_{i+1} \in [0,R-1]$ such that
$C(r_{i-1},r_i,r_{i+1}) = \true$,

\parbox{.2cm}{\hspace{.2cm}}
\parbox{2in}{\irule{}
{\ant{\convex{i,r_0,r_1,r_{i-1},r_i} = w}}
{}
{\con{\convex{i+1,r_0,r_1,r_i,r_{i+1}} = w + D(i,r_i,r_{i+1})}}}

\vspace{.2cm}
(3) for $r_0,r_1,r_{N-1} \in [0,R-1]$ such that
$C(r_{N-1},r_0,r_1) = \true$,

\parbox{.2cm}{\hspace{.2cm}}
\parbox{2in}{\irule{}
{\ant{\convex{N,r_0,r_1,r_{N-1},r_0} = w}}
{}
{\con{\goal = w}}}
\caption{Rules for finding an optimal convex object.}
\label{fig:convex-rules}
\end{figure}

To construct an abstraction hierarchy we define $L$ nested partitions
of the radius space $[0,R-1]$ into ranges of integers.  In an abstract
statement instead of specifying an integer value for $r(\theta)$ we
will specify the range in which $r(\theta)$ is contained.  To simplify
notation we assume that $R$ is a power of two.  The $k$-th partition
$P^k$ contains $R/2^k$ ranges, each with $2^k$ consecutive integers.
The $j$-th range in $P^k$ is given by $[j * 2^k,(j+1) * 2^k - 1]$.

The statements in the abstraction hierarchy are,
$$\Sigma_k = \; \{\convex{i,a,b,c,d} \;|\; i \in [1,N],\; a,b,c,d \in
P^k\} \cup \{\goal_k\},$$ for $k \in [0,L-1]$.  A range in $P^0$
contains a single integer so $\Sigma_0 = \Sigma$.  Let $f$ map a range
in $P^k$ to the range in $P^{k+1}$ containing it.  For statements in
level $k < L-1$ we define the abstraction function,
\begin{align*}
\abs(\convex{i,a,b,c,d}) = & \; \convex{i,f(a),f(b),f(c),f(d)}, \\
\abs(\goal_k) = & \; \goal_{k+1}.
\end{align*}

The abstract rules use bounds on the data costs for boundary segments
between $(\theta_i,s_i)$ and $(\theta_{i+1},s_{i+1})$ where $s_i$ and
$s_{i+1}$ are ranges in $P^k$,
$$D^k(i,s_i,s_{i+1}) = 
\min_{\begin{array}{c}
r_i \in s_i \\ r_{i+1} \in s_{i+1}
\end{array}} D(i,r_i,r_{i+1}).$$
Since each range in $P^k$ is the union of two ranges in $P^{k-1}$ one
entry in $D^k$ can be computed quickly (in constant time) once
$D^{k-1}$ is computed.  The bounds for all levels can be computed in
$O(NR^2)$ time total.  We also need abstract versions of the convexity
constraints.  For $s_{i-1}, s_i, s_{i+1} \in P^k$, let
$C^k(s_{i-1},s_i,s_{i+1}) = \true$ if there exist integers $r_{i-1}$,
$r_i$ and $r_{i+1}$ in $s_{i-1}$, $s_i$ and $s_{i+1}$ respectively
such that $C(r_{i-1}, r_i, r_{i+1}) = \true$.  The value of $C^k$ can
be defined in closed form and evaluated quickly using simple geometry.

The rules in the abstraction hierarchy are almost identical to the
rules in Figure~\ref{fig:convex-rules}.  The rules in level $k$ are
obtained from the original rules by simply replacing each instance of
$[0,R-1]$ by $P^k$, $C$ by $C^k$ and $D$ by $D^k$.

\subsection{Experimental Results}

Figure~\ref{fig:cells} shows an example image with a set of reference
locations that we selected manually and the optimal convex object
found around each reference point.  There are 14 reference locations
and we used $N=30$ and $R=60$ to parametrize each object.
Table~\ref{tab:time} compares the running time of different
optimization algorithms we implemented for this problem.  Each line
shows the time it took to solve all 14 problems contained in the
example image using a particular search algorithm.  The standard DP
algorithm uses the dynamic programming solution outlined above.  The
CFDP method is based on the algorithm by \citeA{Raphael01} but modified
for our representation of convex objects.  Our hierarchical A*
algorithm uses the abstraction hierarchy described here.  For A* with
pattern databases we used dynamic programming to compute a pattern
database at a particular level of abstraction, and then used this
database to provide heuristic values for A*.  Note
that for the problem described here the pattern database depends on
the input.  The running times listed include the time it took to
compute the pattern database in each case.

We see that CFDP, HA*LD and A* with pattern databases are much more
efficient than the standard dynamic programming algorithm that does
not use abstractions.  HA*LD is slightly faster then the other methods
in this example.  Note that while the running time varies from
algorithm to algorithm the output of every method is the same as they
all find globally optimum objects.

\begin{figure}
\centering 
\begin{tabular}{c}
\includegraphics[width=4.5in]{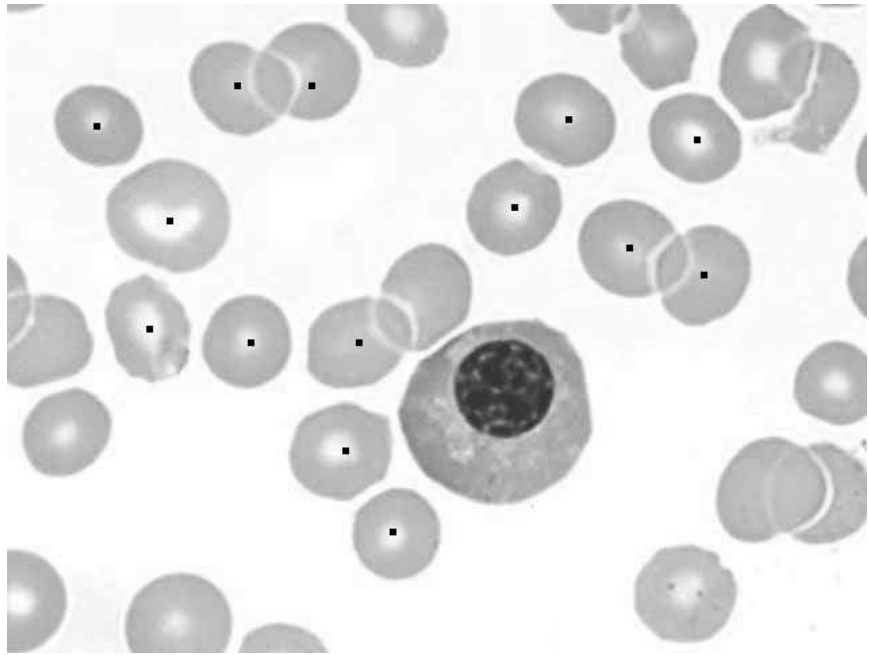} \\ (a) \\ \\
\includegraphics[width=4.5in]{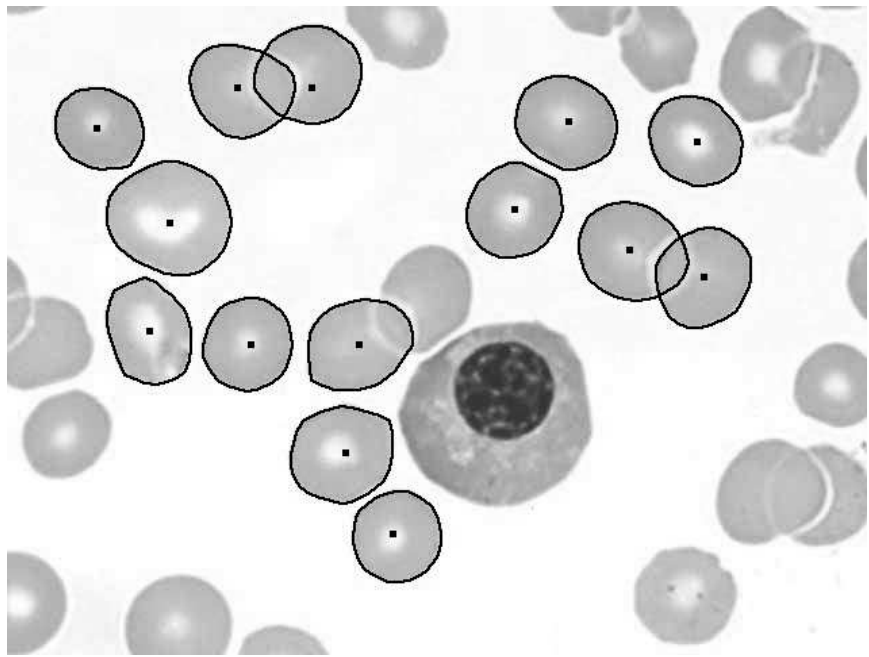} \\ (b) 
\end{tabular}
\caption{(a) Reference locations.  (b) Optimal convex objects.}
\label{fig:cells}
\end{figure}

\begin{table}
\centering
\begin{tabular}{|c|c|}
\hline 
Standard DP & 6718.6 seconds \\ 
\hline 
CFDP & 13.5 seconds \\ 
\hline
HA*LD & 8.6 seconds \\ 
\hline
A* with pattern database in $\Sigma_2$ & 14.3 seconds \\
\hline
A* with pattern database in $\Sigma_3$ & 29.7 seconds \\
\hline
\end{tabular}
\caption{Running time comparison for the example in Figure~\ref{fig:cells}.}
\label{tab:time}
\end{table}

For a quantitative evaluation of the different search algorithms we
created a large set of problems of varying difficulty and size as
follows.  For a given value of $R$ we generated square images of width
and height $2*R+1$.  Each image has a circle with radius less than $R$
near the center and the pixels in an image are corrupted by
independent Gaussian noise.  The difficulty of a problem is controlled
by the standard deviation, $\sigma$, of the noise.
Figure~\ref{fig:circle} shows some example images and optimal convex
object found around their centers.

\begin{figure}
\centering 
\begin{tabular}{ccc}
\includegraphics[width=1.5in]{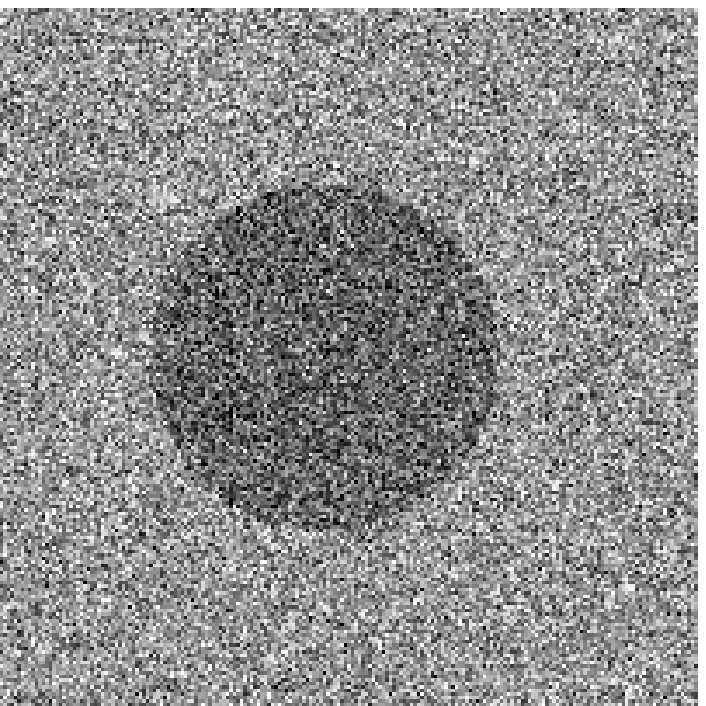} &
\includegraphics[width=1.5in]{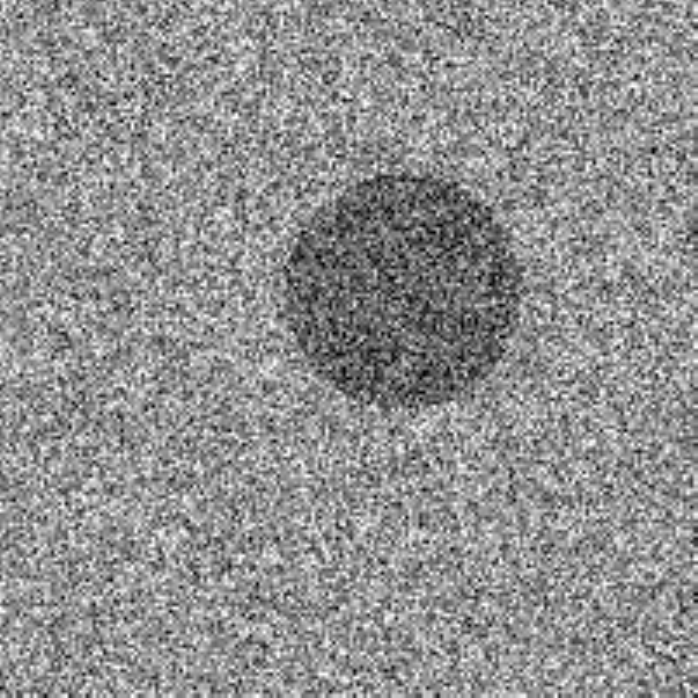} &
\includegraphics[width=1.5in]{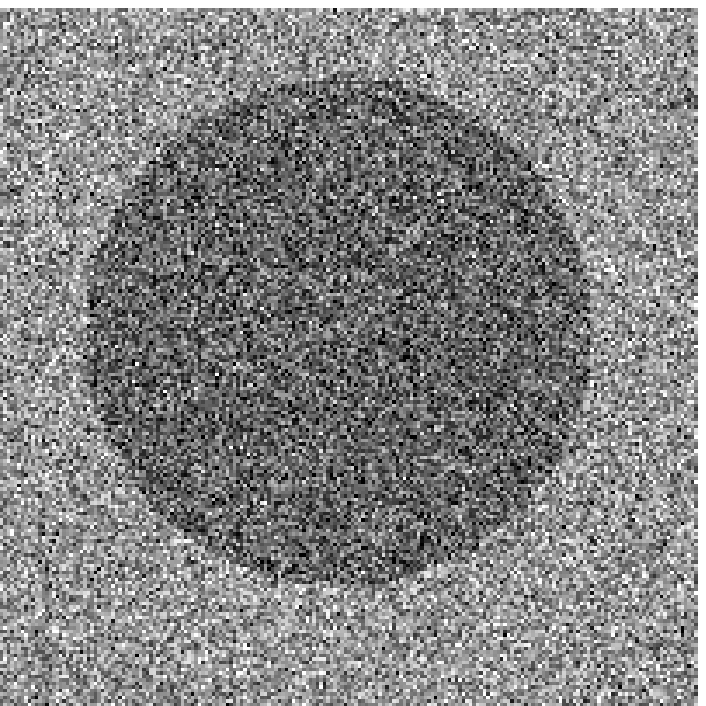} \\
\includegraphics[width=1.5in]{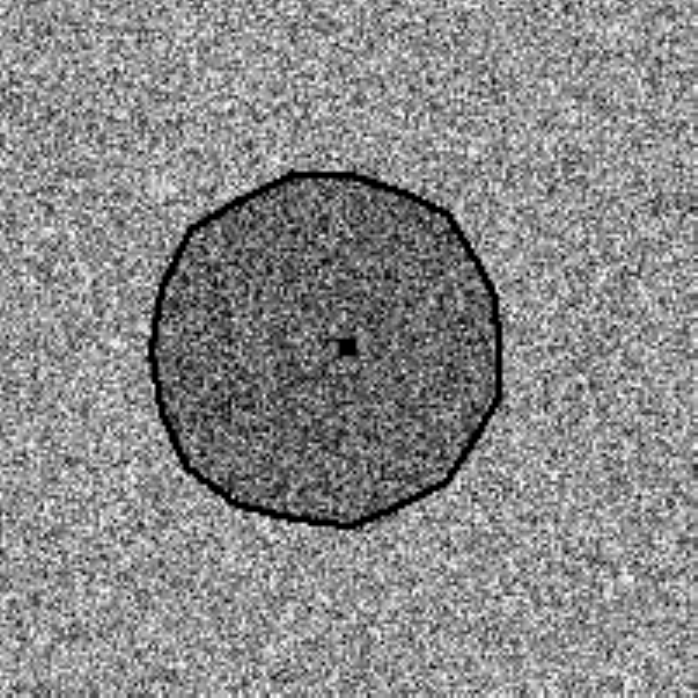} &
\includegraphics[width=1.5in]{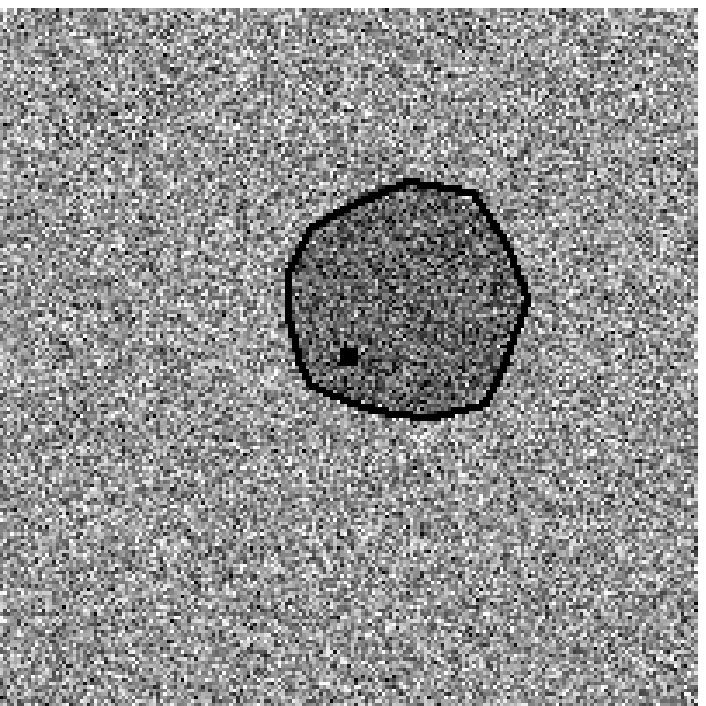} &
\includegraphics[width=1.5in]{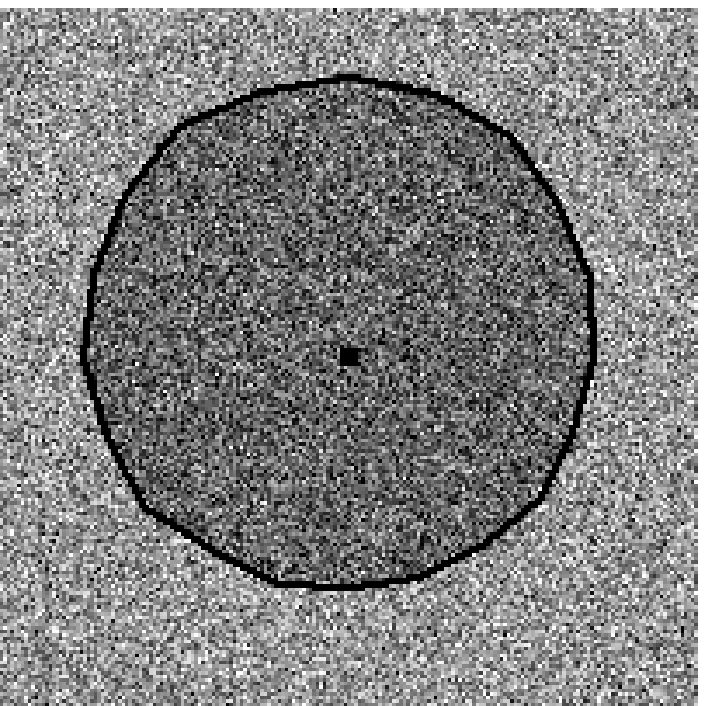} 
\end{tabular}
\caption{Random images with circles and the optimal convex object
around the center of each one (with $N=20$ and $R=100$).  The noise
level in the images is $\sigma=50$.}
\label{fig:circle}
\end{figure}

The graph in Figure~\ref{fig:noise} shows the running time (in
seconds) of the different search algorithms as a function of the noise
level when the problem size is fixed at $R=100$.  Each sample point
indicates the average running time over 200 random inputs.  The graph
shows running times up to a point after which the circles can not be
reliably detected.  We compared HA*LD with CFDP and A* using
pattern databases (PD2 and PD3).  Here PD2 and PD3 refer to A* with a
pattern database defined in $\Sigma_2$ and $\Sigma_3$ respectively.
Since the pattern database needs to be recomputed for each input there
is a trade-off in the amount of time spent computing the database and
the accuracy of the heuristic it provides.  We see that for easy
problems it is better to use a smaller database (defined at a higher
level of abstraction) while for harder problems it is worth spending
time computing a bigger database.  HA*LD outperforms the other methods
in every situation captured here.

\begin{figure}
\centering 
\includegraphics[width=5.5in]{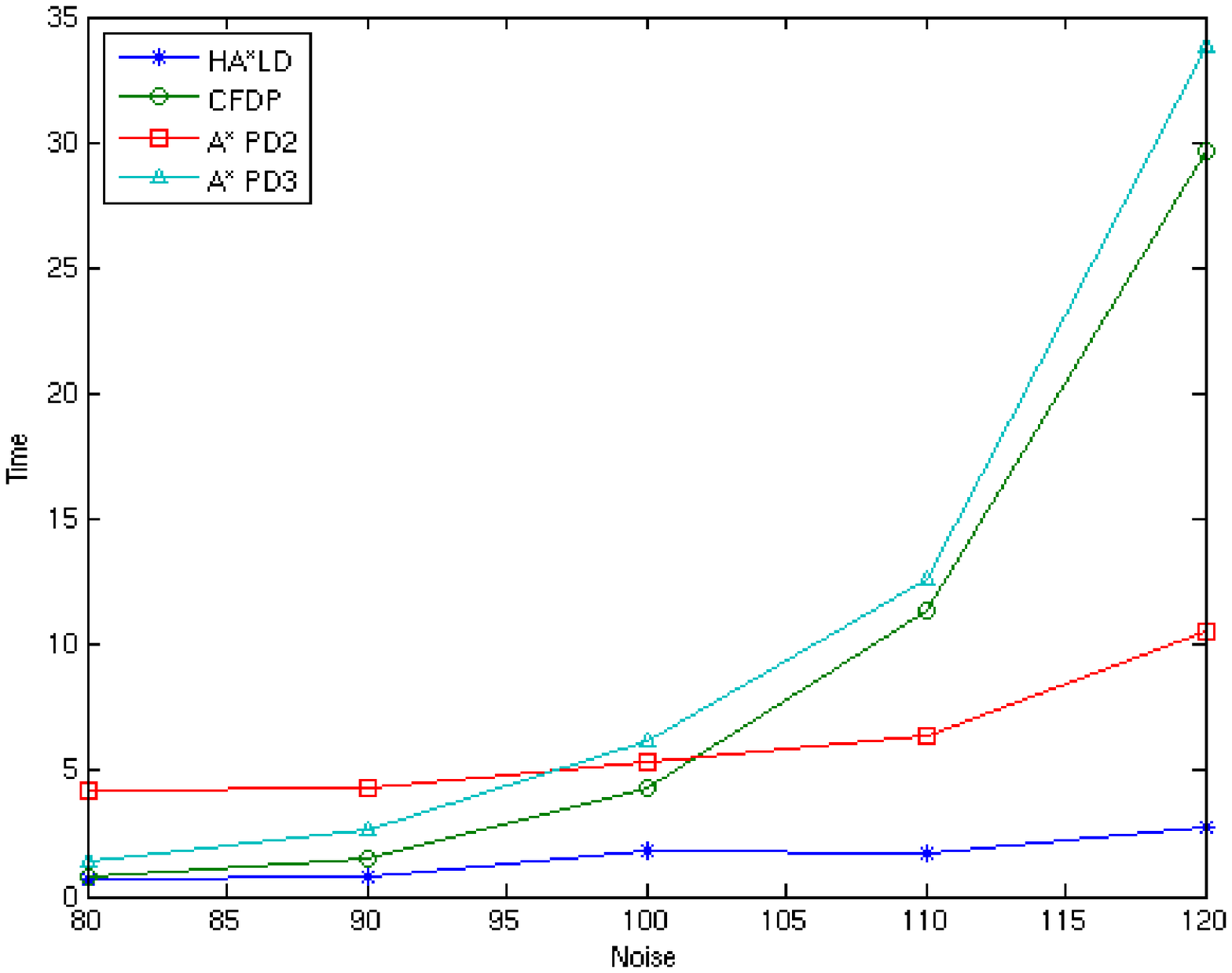}
\caption{Running time of different search algorithms as a function of
the noise level $\sigma$ in the input.  Each sample point indicates
the average running time taken over 200 random inputs.  In each case
$N=20$ and $R=100$.  See text for discussion.}
\label{fig:noise}
\end{figure}

\begin{figure}
\centering 
\includegraphics[width=5.5in]{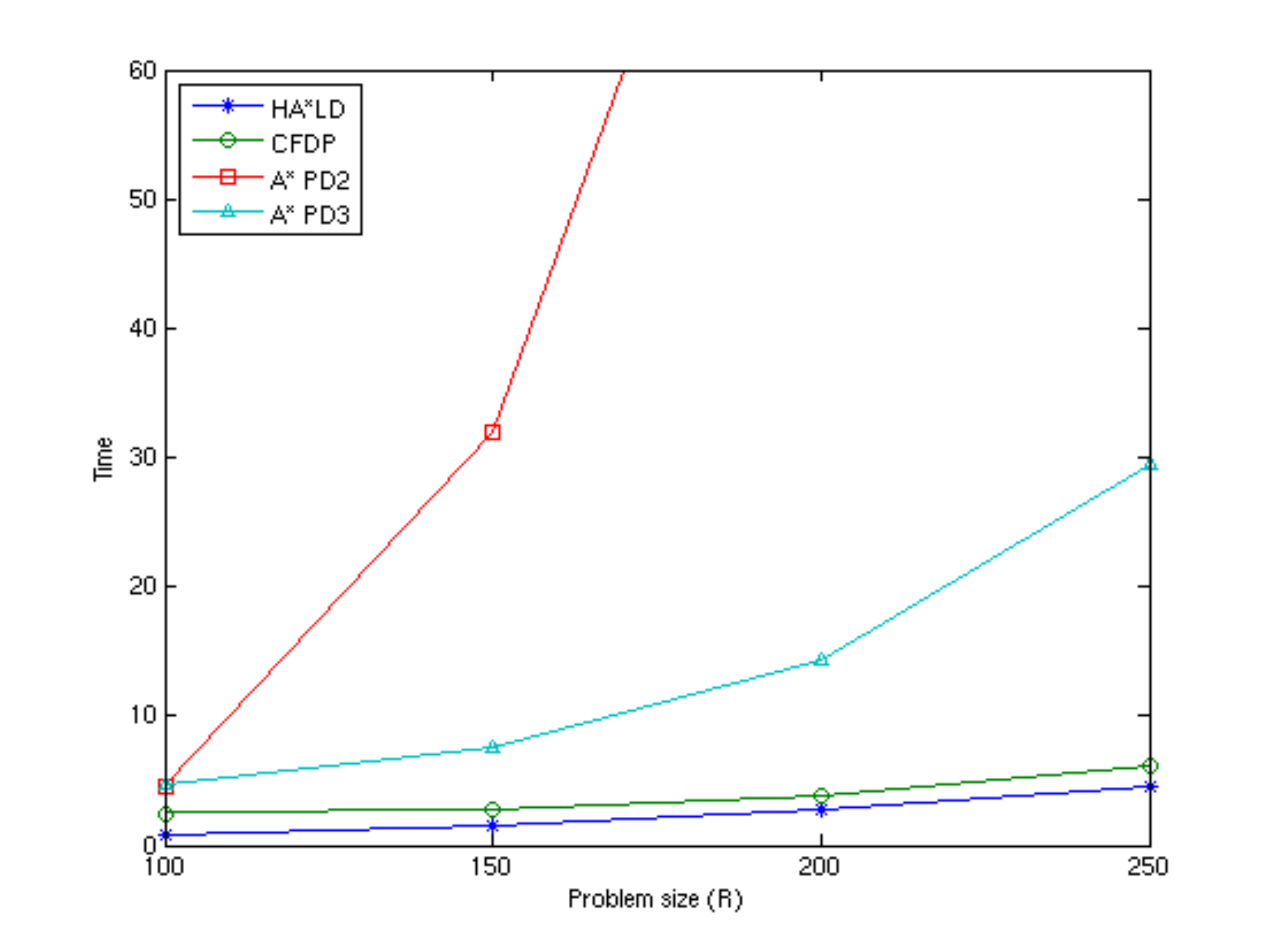}
\caption{Running time of different search algorithms as a function of
the problem size $R$.  Each sample point indicates the average running
time taken over 200 random inputs.  In each case $N=20$ and
$\sigma=100$.  See text for discussion.}
\label{fig:size}
\end{figure}

Figure~\ref{fig:size} shows the running time of the different
methods as a function of the problem size $R$, on problems with a
fixed noise level of $\sigma=100$.  As before each sample point
indicates the average running time taken over 200 random inputs.  We
see that the running time of the pattern database approach grows
quickly as the problem size increases.  This is because computing the
database at any fixed level of abstraction takes $O(NR^5)$ time.  On
the other hand the running time of both CFDP and HA*LD grows much
slower.  While CFDP performed essentially as well as HA*LD in this
experiment, the graph in Figure~\ref{fig:noise} shows that HA*LD
performs better as the difficulty of the problem increases.

\section{Finding Salient Curves in Images}
\label{sec:curves}

A classical problem in computer vision involves finding salient curves
in images.  Intuitively the goal is to find long and smooth curves
that go along paths with high image gradient.  The standard way to
pose the problem is to define a saliency score and search for curves
optimizing that score.  Most methods use a score defined by a
simple combination of local terms.  For example, the score usually depends
on the curvature and the image gradient at each point of a curve.
This type of score can often be optimized efficiently using dynamic
programming or shortest paths algorithms
\cite{Montanari71,Shashua88,Basri96,Williams96}.

Here we consider a new compositional model for finding salient
curves.  An important aspect of this model is that it can capture
global shape constraints.  In particular, it looks for curves that are
almost straight, something that can not be done using local constraints
alone.  Local constraints can enforce small curvature at each point of
a curve, but this is not enough to prevent curves from turning and
twisting around over long distances.  The problem of finding the most
salient curve in an image with the compositional model defined here
can be solved using dynamic programming, but the approach is too slow
for practical use.  Shortest paths algorithms are not applicable
because of the compositional nature of the model.  Instead we can use
A*LD with a heuristic function derived from an abstraction (a pattern
database).

\begin{figure}
\centering
\includegraphics[width=2.9in]{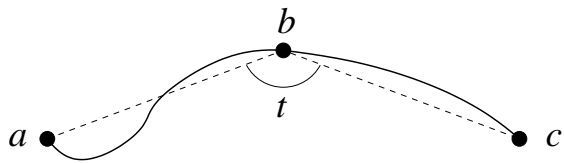}
\caption{A curve with endpoints $(a,c)$ is formed by composing curves
  with endpoints $(a,b)$ and $(b,c)$.  We assume that $t \ge \pi/2$.
  The cost of the composition is proportional to $\sin^2(t)$.  This
  cost is scale invariant and encourages curves to be relatively
  straight.  }
\label{fig:curve}
\end{figure}

Let $C_1$ be a curve with endpoints $a$ and $b$ and $C_2$ be a curve
with endpoints $b$ and $c$.  The two curves can be composed to form a
curve $C$ from $a$ to $c$.  We define the weight of the composition to
be the sum of the weights of $C_1$ and $C_2$ plus a shape cost that
depends on the geometric arrangement of points $(a, b, c)$.
Figure~\ref{fig:curve} illustrates the idea and the shape costs we
use.  Note that when $C_1$ and $C_2$ are long, the arrangement of
their endpoints reflect non-local geometric properties.  In general we
consider composing $C_1$ and $C_2$ if the angle formed by
$\overline{ab}$ and $\overline{bc}$ is at least $\pi/2$ and the
lengths of $C_1$ and $C_2$ are approximately equal.  These constraints
reduce the total number of compositions and play an important role in
the abstract problem defined below.

Besides the compositional rule we say that if $a$ and $b$ are nearby
locations, then there is a short curve with endpoints $a$ and $b$.
This forms a base case for creating longer curves.  We assume that
these short curves are straight, and their weight depends only on the
image data along the line segment from $a$ to $b$.  We use a data
term, $\seg{a,b}$, that is zero if the image gradient along pixels in
$\overline{ab}$ is perpendicular to $\overline{ab}$, and higher otherwise.

Figure~\ref{fig:curve-rules} gives a formal definition of the two
rules in our model.  The constants $k_1$ and $k_2$ specify the minimum
and maximum length of the base case curves, while $L$ is a constant
controlling the maximum depth of derivations.  A derivation of
$\curve{a,b,i}$ encodes a curve from $a$ to $b$.  The value $i$ can
be seen as an approximate measure of arclength.  A derivation of
$\curve{a,b,i}$ is a full binary tree of depth $i$ that encodes a
curve with length between $2^i k_1$ and $2^i k_2$.  
We let $k_2 = 2 k_1$ to allow for curves of any length.

\begin{figure}
(1) for pixels $a,b,c$ where the angle between $\overline{ab}$ and $\overline{bc}$ 
is at least $\pi/2$ and for $0 \le i \le L$,

\parbox{.2cm}{\hspace{.2cm}}
\parbox{2cm}{\irule{}
{\ant{\curve{a,b,i} = w_1}
 \ant{\curve{b,c,i} = w_2}}{}
{\con{\curve{a,c,i+1} = w_1 + w_2 + \shape{a,b,c}}}}

\vspace{.2cm}
(2) for pixels $a,b$ with $k_1 \le ||a-b|| \le k_2$,

\parbox{.2cm}{\hspace{.2cm}}
\parbox{2cm}{\irule{}{}{} 
{\con{\curve{a,b,0} = \seg{a,b}}}}
\caption{Rules for finding ``almost straight'' curves between a pair
of endpoints.  Here $L$, $k_1$ and $k_2$ are constants, while
$\shape{a,b,c}$ is a function measuring the cost of a composition.}
\label{fig:curve-rules}
\end{figure}

The rules in Figure~\ref{fig:curve-rules} do not define a good measure
of saliency by themselves because they always prefer short curves over
long ones.  We can define the saliency of a curve in terms of its
weight \emph{minus} its arclength, so that salient curves will be
light and long.  Let $\lambda$ be a positive constant.  We consider
finding the lightest derivation of $\goal$ using,

\onerule{\irule{}
{\ant{\curve{a,b,i} = w}}
{}
{\con{\goal = w - \lambda 2^i}}}

For an $n \times n$ image there are $\Omega(n^4)$ statements of the
form $\curve{a,c,i}$.  Moreover, if $a$ and $c$ are far apart there
are $\Omega(n)$ choices for a ``midpoint'' $b$ defining the two curves
that are composed in a lightest derivation of $\curve{a,c,i}$.  This
makes a dynamic programming solution to the lightest derivation
problem impractical.  We have tried using KLD but even for small
images the algorithm runs out of memory after a few
minutes.  Below we describe an abstraction we have used to define a
heuristic function for A*LD.

Consider a hierarchical set of partitions of an image into boxes.
The $i$-th partition is defined by tiling the image into boxes of $2^i
\times 2^i$ pixels.  The partitions form a pyramid with boxes of
different sizes at each level.  Each box at level $i$ is the union of
4 boxes at the level below it, and the boxes at level 0 are the pixels
themselves.  Let $f_i(a)$ be the box containing $a$ in the $i$-th
level of the pyramid.  Now define
$$\abs(\curve{a,b,i}) = \curve{f_i(a), f_i(b), i}.$$
Figure~\ref{fig:pyramid} illustrates how this map selects a pyramid
level for an abstract statement.  Intuitively $\abs$ defines an
adaptive coarsening criteria.  If $a$ and $b$ are far from each other,
a curve from $a$ to $b$ must be long, which in turn implies that we
map $a$ and $b$ to boxes in a coarse partition of the image.
This creates an abstract problem that
has a small number of statements without losing too much information.

To define the abstract problem we also need to define a set of
abstract rules.  Recall that for every concrete rule $r$ we need a
corresponding abstract rule $r'$ where the weight of $r'$ is at most
the weight of $r$.  There are a small number of rules with no
antecedents in Figure~\ref{fig:curve-rules}.  For each concrete rule
$\rightarrow_{\seg{a,b}} \curve{a,b,0}$ we define a corresponding
abstract rule, $\rightarrow_{\seg{a,b}} \abs(\curve{a,b,0})$.  The
compositional rules from Figure~\ref{fig:curve-rules} lead 
to abstract rules for composing curves between boxes,
$$\curve{A,B,i},\curve{B,C,i} \rightarrow_v \curve{A',C',i+1},$$ where
$A$, $B$ and $C$ are boxes at the $i$-th pyramid level while $A'$ and
$C'$ are the boxes at level $i+1$ containing $A$ and $C$ respectively.
The weight $v$ should be at most $\shape{a,b,c}$ where $a$, $b$ and
$c$ are arbitrary pixels in $A$, $B$ and $C$ respectively.  We compute
a value for $v$ by bounding the orientations of the line segments
$\overline{ab}$ and $\overline{bc}$ between boxes.

\begin{figure}
\centerline{\includegraphics[scale=0.5]{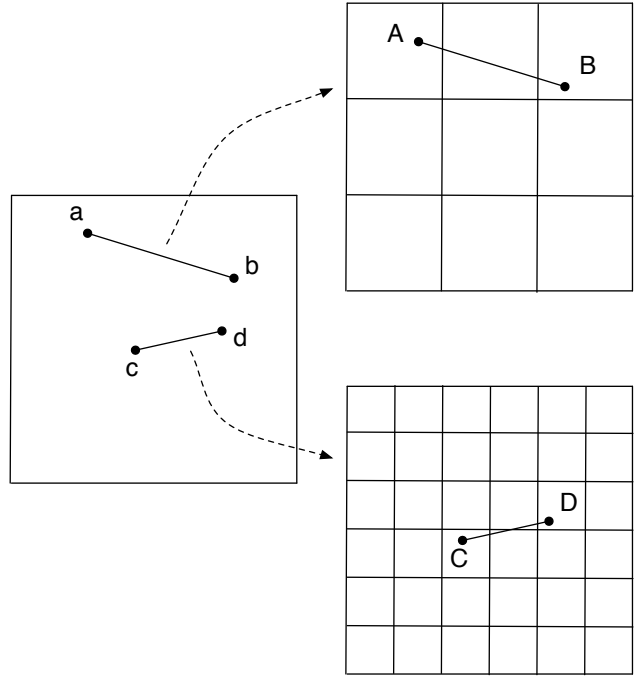}}
\caption{The abstraction maps each curve statement to a statement
about curves between boxes.  If $i > j$ then $\curve{a,b,i}$ gets
coarsened more than $\curve{c,d,j}$.  Since light curves are almost
straight, $i > j$ usually implies that $||a-b|| > ||c-d||$. }
\label{fig:pyramid}
\end{figure}

\begin{figure}
\centerline{
\includegraphics[scale=1]{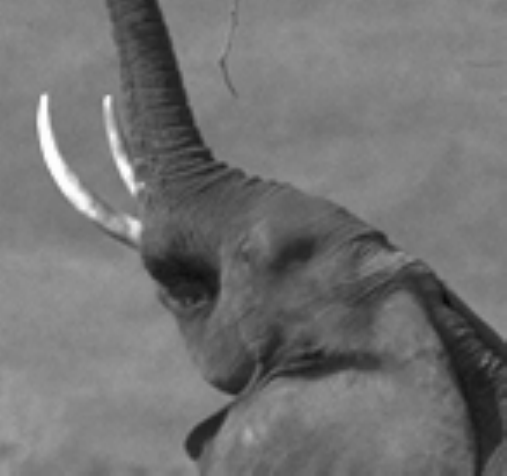}
\includegraphics[scale=1]{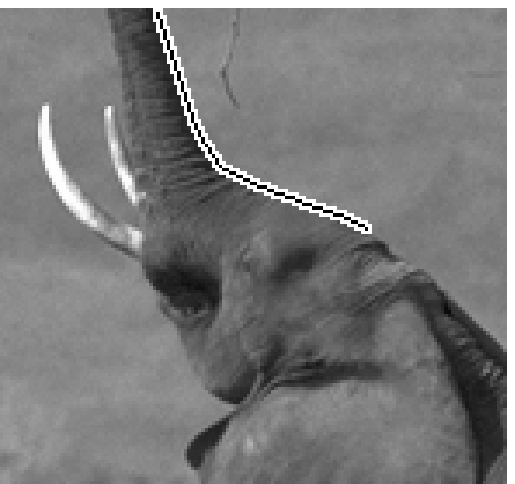}}
\centerline{$146 \times 137$ pixels.  Running time: 50 seconds (38 + 12).}
\vspace{.5cm}
\centerline{
\includegraphics[scale=1]{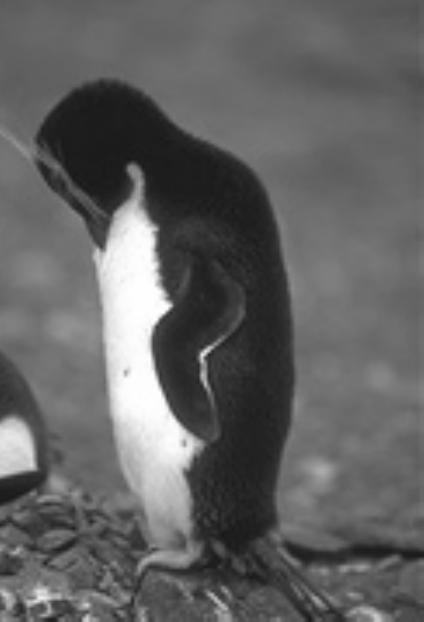}
\includegraphics[scale=1]{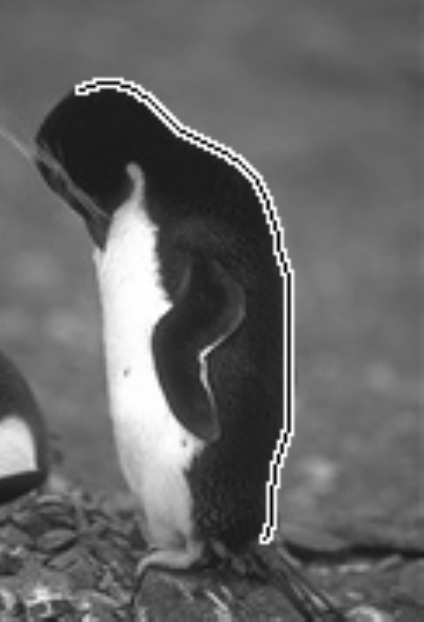}}
\centerline{$122 \times 179$ pixels.  Running time: 65 seconds (43 + 22).}
\vspace{.5cm}
\centerline{
\includegraphics[scale=1]{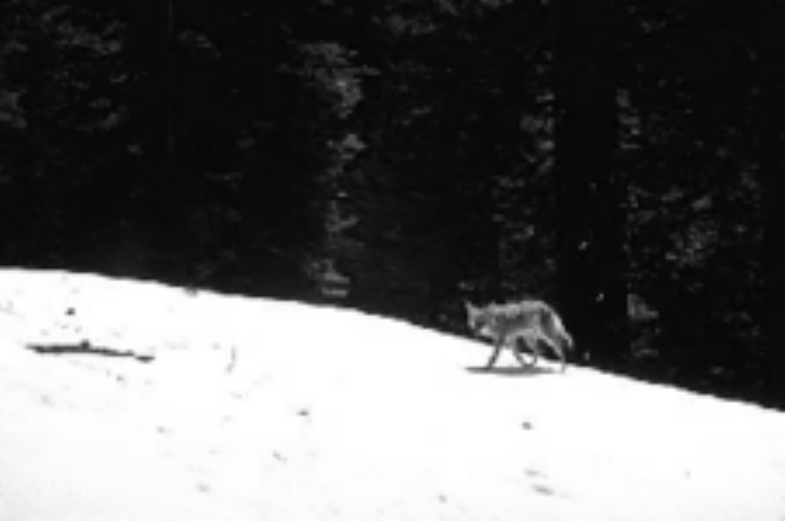}
\includegraphics[scale=1]{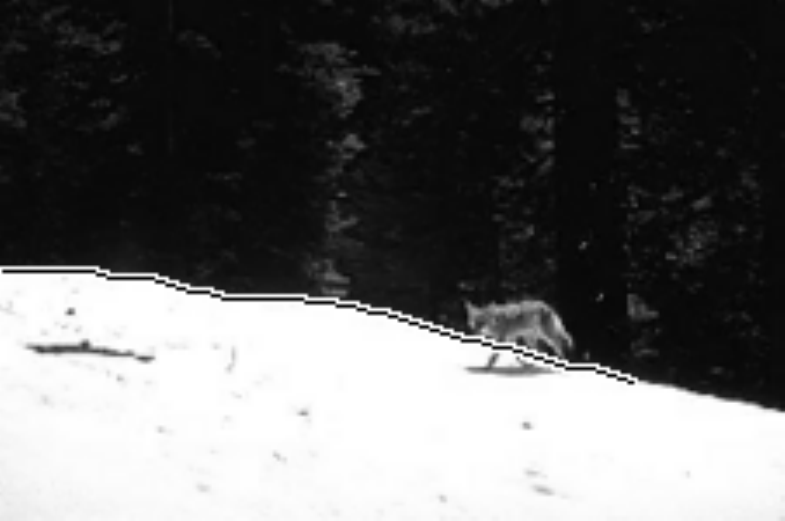}}
\centerline{$226 \times 150$ pixels.  Running time: 73 seconds (61 + 12).}
\caption{The most salient curve in different images.  The running time
is the sum of the time spent computing the pattern database and the
time spent solving the concrete problem.}
\label{fig:curves-results}
\end{figure}

\begin{figure}
\centerline{\includegraphics[scale=1]{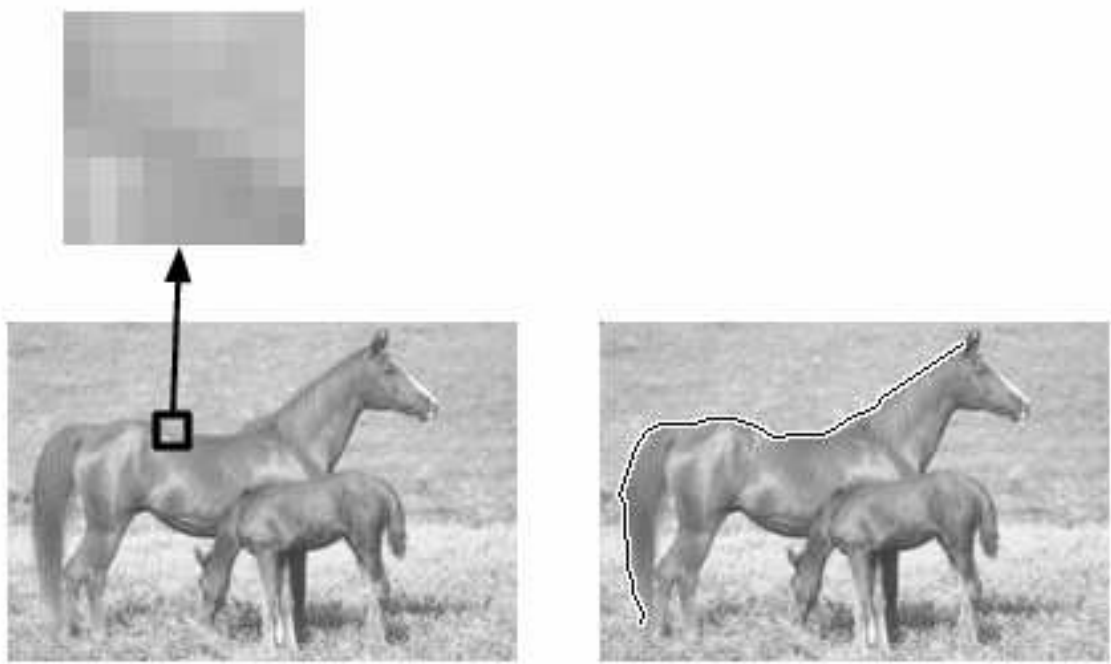}}
\caption{An example where the most salient curve goes over locations with
essentially no local evidence for a the curve at those locations.}
\label{fig:horse}
\end{figure}

The abstract problem defined above is relatively small even in large
images, so we can use the pattern database approach outlined in
Section~\ref{sec:abstractions}.  For each input image we use KLD to
compute lightest context weights for every abstract statement.  We
then use these weights as heuristic values for solving the concrete
problem with A*LD.  Figure~\ref{fig:curves-results} illustrates some
of the results we obtained using this method.  It seems like the
abstract problem is able to capture that most short curves can not be
extended to a salient curve.  It took about one minute to find the
most salient curve in each of these images.
Figure~\ref{fig:curves-results} lists the dimensions of each image and
the running time in each case.

Note that our algorithm does not rely on an initial binary edge
detection stage.  Instead the base case rules allow for salient curves
to go over any pixel, even if there is no local evidence for a
boundary at a particular location.  Figure~\ref{fig:horse} shows an
example where this happens.  In this case there is a small part of the
horse back that blends with the background if we consider local 
properties alone.

The curve finding algorithm described in this section would be very
difficult to formulate without A*LD and the general notion of
heuristics derived from abstractions for lightest derivation problems.
However, using the framework introduced in this paper it becomes relatively
easy to specify the algorithm.  

In the future we plan to ``compose'' the rules for computing salient
curves with rules for computing more complex structures.  The basic
idea of using a pyramid of boxes for defining an abstract problem should
be applicable to a variety of problems in computer vision.

\section{Conclusion}
\label{sec:conclusion}

Although we have presented some preliminary results in the last two
sections, we view the main contribution of this paper as providing a
general architecture for perceptual inference.  Dijkstra's shortest
paths algorithm and A* search are both fundamental algorithms with
many applications.  Knuth noted the generalization of Dijkstra's
algorithm to more general problems defined by a set of recursive
rules.  In this paper we have given similar generalizations for A*
search and heuristics derived from abstractions.  We have also
described a new method for solving lightest derivation problems using
a hierarchy of abstractions.  Finally, we have outlined an approach
for using these generalizations in the construction of processing
pipelines for perceptual inference.  

\section*{Acknowledgments}

This material is based upon work supported by the National Science
Foundation under Grant No. 0535174 and 0534820.


\bibliography{astar}
\bibliographystyle{jair/theapa}

\end{document}